\documentclass{article}
\usepackage[english]{babel}
\usepackage[utf8]{inputenc}

\usepackage{graphicx}

\usepackage{natbib}

\usepackage{algorithm}
\usepackage[noend]{algpseudocode}
\algrenewcommand{\algorithmiccomment}[1]{$\vartriangleright$ #1}
\algrenewcommand{\algorithmicreturn}{\textbf{Return: }}
\algnewcommand\algorithmicinput{\textbf{Input: }}
\algnewcommand\Input{\State \algorithmicinput}

\usepackage[strict]{changepage}

\usepackage{bm}

\usepackage{amsmath}
\usepackage{amssymb}

\usepackage{amsthm}
\usepackage{mathtools}
\mathtoolsset{showonlyrefs}
\usepackage{mathbbol}

\usepackage{color}
\usepackage{xcolor}
\fboxsep0pt

\usepackage{enumitem}

\usepackage{xspace}
\usepackage{nicefrac}

\usepackage{booktabs}

\usepackage[super]{nth}

\usepackage{hyperref}

\usepackage[accepted]{icml2019}

\icmltitlerunning{Geometric Losses for Distributional Learning}

 
\newtheorem{lemma}{Lemma} 
\newtheorem{proposition}{Proposition}

\makeatletter
\addto\extrasenglish{%
  \renewcommand{\sectionautorefname}{\S\@gobble}%
  \renewcommand{\subsectionautorefname}{\S\@gobble}%
  \renewcommand{\subsubsectionautorefname}{\S\@gobble}%
}
\makeatother

%
%
%


\def\EE{{\mathbb E}}

\def\RR{{\mathbb R}}

\def\p{{\bm \p}}

\def\p{{\bm p}}

\newcommand{\Simplex}{\triangle}

\newcommand{\cM}{{\mathcal M}}
\newcommand{\cE}{{\mathcal E}}

\newcommand{\cD}{{\mathcal D}}
\newcommand{\cR}{{\mathcal R}}
\newcommand{\cC}{{\mathcal C}}
\newcommand{\cU}{{\mathcal U}}

\newcommand{\cX}{{\mathcal X}}
\newcommand{\cY}{{\mathcal Y}}

\newcommand{\cF}{{\mathcal F}}

\newcommand{\probspace}{{\mathcal{M}_1^+(\cY)}}
\newcommand{\posspace}{{\mathcal{M}^+(\cY)}}
\newcommand{\contspace}{{\mathcal{C}(\cY)}}



\DeclareMathOperator*{\argmin}{argmin}
\DeclareMathOperator*{\argmax}{argmax}

\DeclareMathOperator*{\supp}{supp}

\begin{document}
\twocolumn[
\icmltitle{Geometric Losses for Distributional Learning}
\icmlsetsymbol{equal}{*}

\begin{icmlauthorlist}
\icmlauthor{Arthur Mensch}{ens,cnrs}
\icmlauthor{Mathieu Blondel}{ntt}
\icmlauthor{Gabriel Peyré}{ens,cnrs}
\end{icmlauthorlist}

\icmlaffiliation{ntt}{NTT Communication Science Laboratories, Kyoto, Japan}
\icmlaffiliation{ens}{École Normale Supérieure, DMA, Paris, France}
\icmlaffiliation{cnrs}{CNRS, France}
\icmlcorrespondingauthor{AM}{arthur.mensch@m4x.org}
\icmlcorrespondingauthor{MB}{mathieu@mblondel.org}
\icmlcorrespondingauthor{GP}{gabriel.peyre@ens.fr}

\icmlkeywords{Optimal transport, distributional learning}
\vskip 0.3in
]

\printAffiliationsAndNotice

\begin{abstract}

Building upon recent advances in entropy-regularized optimal transport, and upon Fenchel duality between measures and continuous functions, we propose a generalization of the logistic loss that incorporates a metric or cost between classes. Unlike previous attempts to use optimal transport distances for learning, our loss results in unconstrained convex objective functions, supports infinite (or very large) class spaces, and naturally defines a geometric generalization of the softmax operator. The geometric properties of this loss make it suitable for predicting sparse and singular distributions, for instance supported on curves or hyper-surfaces. We study the theoretical properties of our loss and showcase its effectiveness on two applications: ordinal regression and drawing generation.
\end{abstract}


\vspace{-0.4cm}
\section{Introduction}

For probabilistic classification, the most popular loss is arguably the
(multinomial) logistic loss. It is smooth, enabling fast convergence rates, and
the softmax operator provides a consistent mapping to probability
distributions. 
In many applications, different costs are associated to misclassification errors between classes. 
While a cost-aware generalization of the logistic loss
exists \citep{margincrf}, it does not provide a cost-aware counterpart of the softmax.
The softmax is pointwise by nature: it is oblivious to misclassification
costs or to the geometry of classes. 

Optimal transport (Wasserstein) losses have recently gained popularity in machine learning, for their ability to 
compare probability distributions in a geometrically faithful manner, with applications such as classification~\citep{kusner2015word}, clustering~\cite{cuturi2014fast}, domain adaptation~\citep{courty2017joint}, dictionary learning~\cite{rolet2016fast} and generative models training~\citep{montavon2016wasserstein,arjovsky2017wasserstein}.
For probabilistic classification, \citet{frogner_learning_2015} proposes to use
entropy-regularized optimal transport \citep{cuturi_sinkhorn_2013} in the
multi-label setting. Although this approach successfully leverages a cost between classes,
it results in a non-convex loss, when combined with a softmax. A similar regularized Wasserstein loss is used by~\citet{luise2018wasserstein} in conjunction with a kernel ridge regression procedure~\cite{ciliberto2016consistent} in order to obtain a consistency result. 

The relation between the logistic loss and the maximum entropy principle is
well-known. Building upon a generalization of the Shannon entropy originating
from entropy regularized optimal transport \citep{feydy_interpolating_2018} and 
Fenchel duality between measures and continuous functions, we propose
a generalization of the logistic loss that takes into account a metric or
cost between classes. Unlike previous attempts to use optimal transport
distances for learning, our loss is convex, and naturally defines a
geometric generalization of the softmax operator. Besides providing novel
insights in the logistic loss, our loss is theoretically sound, even when
learning and predicting \textit{continuous} probability distributions over a potentially infinite number of classes. To sum
up, our contributions are as follows.

\paragraph{Organization and contributions.}

\begin{itemize}[topsep=0pt,itemsep=0pt,parsep=3pt,leftmargin=15pt]

\item We introduce the distribution learning setting, review existings
losses leveraging a cost between classes and point out their shortcomings
(\autoref{sec:related_work}).

\item Building upon entropy-regularized optimal transport, we present a novel
cost-sensitive distributional learning loss and its corresponding softmax operator.
Our proposal is theoretically sound even in continuous measure spaces
(\autoref{sec:construction}).

\item We study the theoretical properties of our loss, such as its Fisher
consistency (\autoref{sec:properties}). We derive tractable methods to compute and minimize it in the discrete distribution setting. We propose an abstract Frank-Wolfe scheme for computations in the continuous setting.

\item Finally, we demonstrate its effectiveness on two discrete prediction tasks involving a geometric cost: ordinal
regression and drawing generation using VAEs~(\autoref{sec:exps}).

\end{itemize}



\paragraph{Notation.} 

We denote $\cX$ a finite or infinite input space, and $\cY$ a compact
potentially infinite output space. When $\cY$ is a finite set of $d$ classes,
we write $\cY = [d] \triangleq \{1, \dots, d\}$. We denote $\contspace$,
$\cM(\cY)$, $\cM^+(\cY)$ and $\cM^+_1(\cY)$ the sets of continuous (bounded)
functions, Radon (positive) measures and probability measures on $\cY$. Note that in finite dimensions,
$\cM^+_1([d]) = \triangle^d$ is the probability simplex and $\cC([d]) = \RR^d$.
We write vectors in $\RR^d$ and continuous
functions in $\contspace$ with small normal letters, e.g., $f, g$. 
In the finite
setting, where $\cY = \{y_1, \dots, y_d\}$, we define $f_i \triangleq f(y_i)$.
We write elements of $\Simplex^d$ and measures in $\cM(\cY)$ with greek letters
$\alpha, \beta$. We write matrices and operators with capital letters, e.g., $C$.
We denote by $\otimes$ and $\oplus$ the tensor product and sum, and $\langle \cdot,
\cdot \rangle$ the scalar product.

\section{Background}
\label{sec:related_work}

In this section, after introducing distributional learning in a discrete
setting, we review two lines of work for taking into account a cost $C$ between
classes: cost-augmented losses, and geometric losses based on Wasserstein and
energy distances. Their shortcomings motivate the introduction of a new
geometric loss in \autoref{sec:construction}.

\subsection{Discrete distribution prediction and learning}
\label{sec:discrete}

We consider a general predictive setting in which an input vector~$x \in \cX$ is
fed to a parametrized model $g_\theta : \cX \to \RR^d$ (e.g., a neural network),
that predicts a score vector $f = g_\theta(x) \in \RR^d$. At test time, that
vector is used to predict the most likely class
$\hat y = \argmax_{y \in [d]} f_y$.
In order to predict a probability distribution $\alpha \in \triangle^d$, it is
common to compose $g_\theta$ with a link function $\psi(f)$,
where $\psi \colon \RR^d \to \triangle^d$.
A typical example of link function is the softmax.

To learn the model parameters $\theta$, it is necessary to define a loss
$\ell(\alpha, f)$ between a ground-truth $\alpha \in \triangle^d$ and the score
vector $f \in \RR^d$.  Composite losses \citep{reid_composite_binary,
vernet_2016} decompose that loss into a loss $\ell_\triangle(\alpha, \beta)$,
where $\ell_\triangle \colon \triangle^d \times \triangle^d \to \RR$
and $\psi$:
$\ell(\alpha, f) \triangleq \ell_\triangle(\alpha, \psi(f))$.
Note that depending on $\ell_\triangle$ and $\psi$, $\ell$ is not necessarily
convex in $f$. More recently, \citet{blondel_learning_2018,fy_losses_journal} introduced
Fenchel-Young losses, a generic way to directly construct a loss $\ell$ and a
corresponding link $\psi$. We will revisit and generalize that framework to the
continuous output setting in the
sequel of this paper. Given a loss $\ell$ and a training set of
input-distribution pairs, $(x_i, \alpha_i)$, where $x_i \in \cX$ and $\alpha_i
\in \triangle^d$, we then minimize
$\sum_i \ell(\alpha_i, g_\theta(x_i))$,
potentially with regularization on $\theta$.

\subsection{Cost-augmented losses}\label{sec:cost-augmented}

Before introducing a new geometric cost-sensitive loss in
\autoref{sec:construction}, let us now review classical existing
cost-sensitive loss functions.  Let $C$ be a $d \times d$ matrix, such that
$c_{y,y'}\ge 0$ 
is the cost of misclassifying class $y \in [d]$ as class $y' \in [d]$. We
assume $c_{y,y} = 0$ for all $y \in [d]$. To take into account the cost $C$,
in the single label setting, it is natural to define
a loss $L \colon [d] \times \RR^d \to \RR$ as follows
\begin{equation}
    L(y, f) = c_{y,y'} \quad \text{where} \quad y' \in \argmax_{i \in [d]}
    {f}_{i}.
\label{eq:cost_sensitive_non_convex}
\end{equation}
To obtain a loss $\ell \colon \triangle^d \times \RR^d \to \RR$, we simply
define $\ell(\delta_y, f) \triangleq L(y, f)$,
where $\delta_y$ is the one-hot representation of $y \in [d]$.
Note that choosing $c_{y,y'}=1$ when $y \neq y'$ and $c_{y,y'}=0$ otherwise
(i.e., $C = 1 - I_{d \times d}$) reduces to the zero-one loss.
To obtain a convex upper-bound, \eqref{eq:cost_sensitive_non_convex} is
typically replaced with a cost-augmented hinge loss
\citep{multiclass_svm,structured_hinge}:
\begin{equation}
    L(y, f) = \max_{i \in [d]} ~ c_{y,i} + f_i - f_y.
\end{equation}
Replacing the max above with a log-sum-exp leads to a cost-augmented version of
the logistic (or conditional random field) loss \citep{margincrf}.
Another convex relaxation is the cost-sensitive \textit{pairwise} hinge loss
\citep{multiclass_weston,duchi_multiclass_2016}.
Remarkably, all these losses use only one row of $C$, the one corresponding to
the ground truth $y$. Because of this dependency on $y$, it is not clear how to
define a probabilistic mapping at \textit{test} time. In this paper, we propose a loss
which comes with a geometric generalization of the softmax operator. That
operator uses the entire cost matrix $C$.

\subsection{Wasserstein and energy distance losses}


Wasserstein or optimal transport distances recently gained popularity as a loss
in machine learning for their ability to compare probability distributions in a
geometrically faithful manner. As a representative application,
\citet{frogner_learning_2015} proposed to use entropy-regularized optimal
transport \citep{cuturi_sinkhorn_2013} for cost-sensitive multi-label
classification. Effectively, optimal transport lifts a distance or cost
$C \colon \cY \times \cY \to \RR_+$ to a distance between probability
distributions over $\cY$. Following \citet{genevay_stochastic_2016}, given a
ground-truth probability distribution $\alpha \in \triangle^d$ and a predicted
probability distribution $\beta \in \triangle^d$, we define
\begin{equation}\label{eq:primal_transport}
    \text{OT}_{C, \epsilon}(\alpha, \beta) \triangleq
    \min_{\pi \in \cU(\alpha, \beta)} \langle \pi, C \rangle 
    + \varepsilon \text{KL}(\pi \vert \alpha \otimes \beta),
\end{equation}
where $\cU$ is the transportation polytope, a subset of $\Simplex^{d \times d}$
whose elements $\pi$ have constrained marginals: $\pi 1 = \alpha$ and $\pi^\top
1 = \beta$. $\text{KL}$ is the Kullback--Leibler divergence (a.k.a. relative
entropy).  Because $\beta$ needs to be a valid probability distribution,
\citet{frogner_learning_2015} propose to use $\beta = \psi(f) = \text{softmax}(f)$, where
$f \in \RR^d$ is a vector of prediction scores. Unfortunately, the resulting
composite loss, $\ell(\alpha,f) = \text{OT}_{C,
\epsilon}(\alpha,\text{softmax}(f))$, is not convex w.r.t. $f$. 
Another class of divergences between measures $\alpha$ and $\beta$ stems from
energy distances~\cite{szekely_energy_2013} and maximum mean discrepancies. However, composing these
divergences with a softmax again breaks convexity in $f$.
In contrast, our proposal is convex in $f$ and defines a natural geometric
softmax.

\section{Continuous and cost-sensitive distributional learning and
prediction}\label{sec:construction}

In this section, we construct a loss between probability measures and
score functions, canonically associated with a link function.  Our construction
takes into account a cost function $C : \cY \times \cY \to \RR$ between classes.
Unlike existing methods reviewed in~\autoref{sec:cost-augmented}, our loss
is well defined and convex on compact, possibly infinite spaces~$\cY$. We start
by extending the setting of \autoref{sec:discrete} to predicting
\textit{arbitrary probabilities}, for instance having continuous densities with
respect to the Lebesgue measure or singular distributions supported on curves or
surfaces.


\subsection{Continuous probabilities and score functions}

We consider a compact metric space of outputs $\cY$, endowed with a \textit{symmetric} cost
function $C: \cY \times \cY \to \RR$. We wish to predict probabilities over
$\cY$, that is, learn to predict distributions 
$\alpha \in \probspace$. The
space of probability measures forms a closed subset of the space of Radon
measures $\cM(\cY)$, i.e., $\probspace \subseteq \cM(\cY)$. From the Riesz
representation theorem, $\cM(\cY)$ is the topological dual of the space
of continuous measures $\contspace$, endowed with the uniform convergence norm
$\Vert \cdot \Vert_\infty$. 
The topological duality between the primal $\cM(\cY)$ and the dual $\contspace$
defines a pairing, similar to a ``scalar product'', between these
spaces:
\begin{equation} 
\langle \alpha,f  \rangle \triangleq \int_\cY f(y) \textrm{d}\alpha(y) 
= \EE[f(Y)],
\end{equation} 
for all $\alpha \in \cM(\cY)$ and $f \in \contspace$, where $Y$ is a random variable with law $\alpha$. 
This pairing also defines the natural topology to compare measures and to
differentiate functionals depending on measures. This is the so-called
weak$^\star$ topology, which corresponds to the convergence in law of random
variables. A sequence $\alpha_n$ is said to converge weak$^\star$ to some
$\alpha$ if for all functions $f \in \mathcal{C}(\mathcal{Y})$, $\langle
\alpha_n,f  \rangle  \rightarrow \langle \alpha,f  \rangle$. 
Note that when endowing $\cM(\cY)$ with this weak$^\star$ topology, the dual of
$\cM(\cY)$ is $\contspace$, which is the key to be able to use duality (and in
particular Legendre-Fenchel transform) from convex optimization.
Using this topology is fundamental to define geometric losses that can cope with
arbitrary, possibly highly localized or even singular distributions (for
instance sparse sums of Diracs or measures concentrated on thin sets such as 2-D
curves or 3-D surfaces). 

Similarly to the discrete setting reviewed in \autoref{sec:discrete}, in the
continuous setting, we now wish to predict a distribution $\alpha \in
\probspace$ by setting $\alpha = \psi(f)$, where $f = g_\theta(x) \in
\contspace$,
$g_\theta \colon \cX \to \contspace$ (i.e., $g_\theta$ is unconstrained), 
and $\psi \colon
\contspace \to \probspace$ is a link function.
We propose to use maps between the primal $\probspace$ and the \textit{dual}
score space $\contspace$ as link functions. As we shall see, such
\textit{mirror} maps are naturally defined by continuous convex function on the
primal space, through Fenchel-Legendre duality. Our framework
recovers the discrete case $\cY = [d]$ as a particular case, with 
$\Simplex^d$ corresponding to $\cM_1^+([d])$
and $\RR^d$ to $\cC([d])$,
though the isomorphisms $\alpha \to \sum_{i=1}^d \alpha_i \delta_i$ and for all $i \in[d]$, $f(i) = f_i$.

Regularization of optimal transport is our key tool to construct entropy functions
which are continuous with respect to the weak$^\star$ topology, and that can be
conjugated to define a $\contspace \to \probspace$ link function.
It allows us to naturally leverage a cost $C: \cY \times \cY
\to \RR$ between classes.

\subsection{An entropy function for continuous probabilities}\label{sec:sinkhorn_entropy}

The regularized optimal transport cost~\eqref{eq:primal_transport} remains well
defined when $\alpha$ and $\beta$ belong to a continuous measure space
$\probspace$, with $\cU$ now being a subset of $\cM_1^+(\cY \times \cY)$ with marginal
constraints. It induces the self-transport
functional~\cite{feydy_interpolating_2018}, that we reuse for our purpose:
\begin{equation}
\label{eq:Sinkhorn_negentropy}
    \Omega_C(\alpha) \triangleq \left\{ \begin{array}{lr}
        - \frac{1}{2} \text{OT}_{C, \varepsilon = 2}(\alpha, \alpha)
        &\:\text{for}\: \alpha {\in} \cM^+_1(\cY) 
        \\
        +\infty&\quad\text{otherwise}.
    \end{array}\right.
\end{equation}
We will omit the dependency of $\Omega$ on $C$ when clear from context.  It is shown
by~\citet{feydy_interpolating_2018} that $\Omega$ is continuous and convex on
$\cM(\cY)$, and strictly convex on $\cM^+_1(\cY)$, where continuity is taken
w.r.t. the weak$^\star$ topology. We call $\Omega$,
the \textit{Sinkhorn negentropy}. As a negative entropy function, it can be used
to measure the uncertainty in a probability distribution (lower is more
uncertain), as illustrated in \autoref{fig:2d}.
It will prove crucial in our loss construction. In
the above, we have set w.l.o.g. $\varepsilon = 2$ to recover simple asymptotical behavior of $\Omega$, as will be clear in \autoref{prop:asymptotic}.

We first recall some known results from~\citet{feydy_interpolating_2018}. Using
Fenchel-Rockafellar duality theorem~\cite{rockafellar_extension_1966}, the
function $\Omega$ rewrites as the solution to a Kantorovich-type dual
problem~\citep[see e.g.,][]{villani_optimal_2008}. For all $\alpha \in
\probspace$, we have that
\begin{align}
    -\Omega_C(\alpha) &=
    \max_{f \in \contspace} \langle \alpha, f \rangle - \log \langle \alpha \otimes \alpha, e^{\frac{f \oplus f - C}{2}} \rangle \label{eq:dual_kantorovich}, 
\end{align}    
where we use the \textit{homogeneous} dual (i.e. with a log in the
maximization), as explained in~\citet{cuturi_semidual_2018}.

\paragraph{Gradient and extrapolation.}
$\Omega$ is differentiable in the sense of
measures~\cite{santambrogio_optimal_2015}, meaning that there exists a
continuous function $\nabla \Omega(\alpha)$ such that, for all $\xi_1$, $\xi_2
\in \probspace$, $t > 0$,
\begin{equation}\label{eq:displacement}
    \Omega(\alpha + t(\xi_2 - \xi_1)) = \Omega(\alpha) + t \langle\xi_2 - \xi_1, \nabla \Omega(\alpha)\rangle + o(t).
\end{equation}
As shown in \citet{feydy_interpolating_2018}, this function $f=\nabla \Omega(\alpha)$, that we call the \textit{symmetric Sinkhorn potential}, is a particular solution of the dual problem. It is the only function in $\contspace$ such that $-f = T(-f, \alpha)$, where
the soft $C$-transform operator \cite{cuturi_semidual_2018} is defined as
\begin{equation}
    T(f, \alpha)(y) \triangleq - 2 \log \langle \alpha, e^{\frac{f - C(y,
    \cdot)}{2}} \rangle.
\end{equation}
This operator can be understood as the log-convolution of the measure $\alpha e^{\frac{f}{2}}$ with the Sinkhorn kernel $e^{-\frac{C}{2}}$.
The Sinkhorn potential $f$ has the remarkable property of being defined on all $\cY$, even though the support of $\alpha$ may be smaller.  Given any dual solution $g$ to~\eqref{eq:dual_kantorovich}, which is defined $\alpha$-almost everywhere, we have $f = -T(-g, \alpha)$, i.e. $f$ \textit{extrapolates} the values of $g$ on the support of $\alpha$, using the Sinkhorn kernel.

\paragraph{Special cases.}

The following proposition, which is an original contribution, shows that the
Sinkhorn negentropy asymptotically recovers the negative Shannon entropy and
Gini index \citep{gini_index} when rescaling the cost.
The Sinkhorn negentropy therefore defines a
parametric family of negentropies, recovering these important special cases.
Note however that on continuous spaces $\mathcal{Y}$, the Shannon entropy is not
weak$^\star$ continuous and thus cannot be used to define geometric loss and link
functions, the softmax link function being geometry-oblivious. Similarly, the Gini index is not defined on $\probspace$, as it involves the \textit{squared values} of $\alpha$ in a discrete setting.
\begin{proposition}[Asymptotics of Sinkhorn negentropies]\label{prop:asymptotic}
    For $\cY$ compact, the rescaled Sinkhorn negentropy
    converges to a kernel norm for high regularization $\varepsilon$. 
    Namely, for all $\alpha \in \probspace$, we have
    \begin{align}
        \varepsilon \Omega_{C/\varepsilon}(\alpha) &
        \overset{\varepsilon \to +\infty}{\longrightarrow}      
        \frac{1}{2} \langle \alpha \otimes \alpha, -C \rangle.
    \end{align}
    Let $\cY = [d]$ be discrete and choose $C = 1
    - I_{d \times d}$. The Sinkhorn negentropy converges to the
    Shannon negentropy for low-regularization, and into the negative Gini index for
    high regularization:
    \begin{equation}
        \Omega_{C/\varepsilon}(\alpha) 
        \overset{\varepsilon \to 0}{\longrightarrow} 
        \langle \alpha, \log \alpha \rangle, \:
        \varepsilon \Omega_{C/\varepsilon}(\alpha) 
        \overset{\varepsilon \to +\infty}{\longrightarrow} 
        \frac{1}{2} (\Vert \alpha \Vert_2^2 - 1).
    \end{equation}
\end{proposition}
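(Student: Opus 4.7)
The plan is to reduce each asymptotic claim to manipulations of the regularized primal transport problem after a single convenient rescaling. Starting from $\Omega_C(\alpha) = -\tfrac{1}{2}\text{OT}_{C,2}(\alpha,\alpha)$ and factoring $\varepsilon$ out of the cost in \eqref{eq:primal_transport}, I would first establish the rewriting
\begin{equation}
\varepsilon\,\Omega_{C/\varepsilon}(\alpha) = -\tfrac{1}{2}\min_{\pi\in\cU(\alpha,\alpha)} \Bigl[\langle \pi,C\rangle + 2\varepsilon\,\text{KL}(\pi\,|\,\alpha\otimes\alpha)\Bigr],
\end{equation}
which unifies the three asymptotic regimes.

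For the high-regularization limit (first claim), I proceed by sandwich. The upper bound $\varepsilon\,\Omega_{C/\varepsilon}(\alpha) \le -\tfrac{1}{2}\langle\alpha\otimes\alpha, C\rangle$ follows from evaluating the inner objective at the feasible plan $\pi = \alpha\otimes\alpha$. For the matching lower bound, the same comparison forces any minimizer $\pi^\star_\varepsilon$ to satisfy $\text{KL}(\pi^\star_\varepsilon\,|\,\alpha\otimes\alpha) \le \|C\|_\infty / (2\varepsilon)$, so Pinsker's inequality yields $\pi^\star_\varepsilon \to \alpha\otimes\alpha$ in total variation as $\varepsilon\to\infty$; continuity of $C$ on the compact space $\cY\times\cY$ then lets me pass to the limit in the pairing $\langle \pi^\star_\varepsilon, C\rangle$.

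For the low-regularization limit (second claim) with $\cY = [d]$ and $C = 1-I$, I work instead with $\Omega_{C/\varepsilon}(\alpha) = -\tfrac{1}{2\varepsilon}\min_\pi[\langle\pi,C\rangle + 2\varepsilon\,\text{KL}(\pi\,|\,\alpha\otimes\alpha)]$. Since $\langle\pi, C\rangle = 1 - \sum_i \pi_{ii}$, the factor $1/\varepsilon$ penalizes off-diagonal mass infinitely, forcing $\pi^\star_\varepsilon$ to the unique plan in $\cU(\alpha,\alpha)$ supported on the diagonal, namely $\pi = \mathrm{diag}(\alpha)$. Evaluating at $\mathrm{diag}(\alpha)$ provides the upper bound $-2\varepsilon\,\langle\alpha,\log\alpha\rangle$ on the inner minimum, which simultaneously forces $\tfrac{1}{\varepsilon}(1-\sum_i \pi^\star_{\varepsilon,ii}) \to 0$ and permits $\text{KL}(\pi^\star_\varepsilon\,|\,\alpha\otimes\alpha) \to \text{KL}(\mathrm{diag}(\alpha)\,|\,\alpha\otimes\alpha) = -\langle\alpha,\log\alpha\rangle$ by lower semicontinuity of KL. Substituting gives $\Omega_{C/\varepsilon}(\alpha) \to \langle\alpha,\log\alpha\rangle$.

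The third claim is an immediate corollary of the first: for $C = 1-I$ one computes $\langle\alpha\otimes\alpha, C\rangle = 1 - \|\alpha\|_2^2$, so $\varepsilon\,\Omega_{C/\varepsilon}(\alpha) \to -\tfrac{1}{2}(1 - \|\alpha\|_2^2) = \tfrac{1}{2}(\|\alpha\|_2^2 - 1)$. The main obstacle is the first claim in the continuous setting: one must convert the KL bound on $\pi^\star_\varepsilon$ into convergence strong enough to commute with the pairing against $C$, which is precisely where compactness of $\cY$, continuity of $C$, and Pinsker's inequality come together.
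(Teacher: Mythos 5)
Your proof is correct in substance but takes a genuinely different route from the paper's. The paper handles the $\varepsilon \to 0$ discrete limit via the \emph{Kantorovich dual}: it writes the semi-dual objective $\Psi_\alpha(f)$, sets $\nabla\Psi_\alpha(f)=0$ to find $e^{f_i}=1/\alpha_i$ in closed form, and substitutes back. For $\varepsilon\to\infty$ it simply invokes the known asymptotics of Sinkhorn divergences from \citet{genevay_sinkhorn-autodiff_2017}. Your argument instead stays entirely in the \emph{primal}, exploits the single rescaling
\begin{equation}
\varepsilon\,\Omega_{C/\varepsilon}(\alpha) = -\tfrac{1}{2}\min_{\pi\in\cU(\alpha,\alpha)}\bigl[\langle\pi,C\rangle + 2\varepsilon\,\text{KL}(\pi\,|\,\alpha\otimes\alpha)\bigr],
\end{equation}
and sandwiches the objective by testing against the natural candidate plan ($\alpha\otimes\alpha$ for $\varepsilon\to\infty$, $\mathrm{diag}(\alpha)$ for $\varepsilon\to 0$) and using lower semicontinuity of KL plus Pinsker for the other direction. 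What your route buys is self-containedness: you do not need to cite the $\Gamma$-convergence result of Genevay et al., and the three claims drop out of the same rescaled primal. What the paper's route buys is a shorter computation for $\varepsilon\to0$, since the dual optimum has a one-line closed form.

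Two small corrections worth making. First, your directions of ``upper bound'' and ``lower bound'' are swapped in the $\varepsilon\to\infty$ argument: evaluating the inner minimum at the feasible $\pi=\alpha\otimes\alpha$ gives $\min_\pi[\cdot]\le\langle\alpha\otimes\alpha,C\rangle$, hence $\varepsilon\,\Omega_{C/\varepsilon}(\alpha)\ge-\tfrac12\langle\alpha\otimes\alpha,C\rangle$, which is the \emph{lower} bound on the negentropy, while the Pinsker-plus-continuity argument furnishes the matching $\limsup$. Second, for $\varepsilon\to 0$ you assert both that $\tfrac1\varepsilon\langle\pi^\star_\varepsilon,C\rangle\to0$ and that $\text{KL}(\pi^\star_\varepsilon\,|\,\alpha\otimes\alpha)\to\text{KL}(\mathrm{diag}(\alpha)\,|\,\alpha\otimes\alpha)$ \emph{before} concluding; in fact the sandwich only requires $\liminf\text{KL}(\pi^\star_\varepsilon)\ge-\langle\alpha,\log\alpha\rangle$ (which is exactly what lower semicontinuity gives, once $\langle\pi^\star_\varepsilon,C\rangle\to0$ forces $\pi^\star_\varepsilon\to\mathrm{diag}(\alpha)$), together with the lower bound from evaluating at $\mathrm{diag}(\alpha)$; the stronger claims you state then follow as consequences rather than ingredients. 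Neither issue is a gap in the argument, just in the presentation.
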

Proof is provided in \autoref{app:asymptotic}. The first part of the proposition shows that the Sinkhorn negentropies
converge to a kernel norm~\citep[see e.g.,][]{sriperumbudur_universality_2011}. This is similar to the
regularized Sinkhorn divergences converging to an Energy
Distance~\cite{szekely_energy_2013} for $\varepsilon \to \infty$
\citep{genevay_sinkhorn-autodiff_2017,feydy_interpolating_2018}.

\begin{figure}
    \centering
    \includegraphics[width=\linewidth]{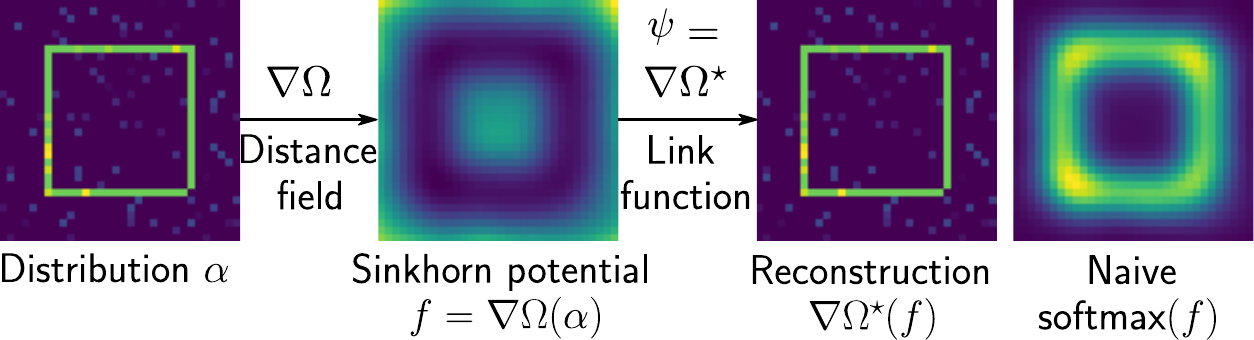}
    \caption{The symmetric Sinkhorn potentials form a distance field to a
        weighted measure. The link function $\psi = \nabla \Omega^\star(f)$
allows to go back from this field in $\contspace$ to a measure $\alpha \in
\probspace$.
}
\label{fig:distance}
\end{figure}

\paragraph{From probabilities to potentials.}

The symmetric Sinkhorn potential $f = \nabla \Omega(\alpha)$ is a continuous
function, or a vector in the discrete setting. It can be interpreted as a
distance field to the distribution~$\alpha$. We visualize this field on a 2D space in
\autoref{fig:distance}, where $\cY$ is the set of $h \times w$ pixels of an image, and we
wish to predict a 2-dimensional probability distribution in $\probspace =
\Simplex^{h \times w}$. Predicting a distance field $f \in \contspace$ to a
measure is more convenient than predicting a distribution directly, as it has
unconstrained values and is therefore easier to optimize against.
For this reason, we propose to learn parametric models that predict a ``distance
field" $f = g_\theta(x)$ given an input $x \in \cX$. In the following section, 
we construct 
a link function $\psi: \contspace \to \probspace$, for general probability
measure and function spaces $\probspace$ and $\contspace$, so to obtain a distributional estimator $\alpha_\theta = \psi \circ g_\theta: \cX \to \probspace$.
\subsection{Fenchel-Young losses in continuous setting}

To that end,
we generalize in this section the recently-proposed Fenchel-Young (FY) loss framework
\citep{blondel_learning_2018,fy_losses_journal}, originally limited to discrete cost-oblivious measure spaces, to
infinite measure spaces. Inspired by that line of work, we use Legendre-Fenchel duality
to define loss and link functions from Sinkhorn negative entropies, in a principled manner.
We define the Legendre-Fenchel conjugate $\Omega^\star \colon \contspace \to
\RR$ of $\Omega$ as
\begin{equation}\label{eq:fenchel-transform}
    \Omega^\star(f) \triangleq \max_{\alpha \in \probspace} 
    \langle \alpha, f \rangle - \Omega(\alpha).
\end{equation}
Rigorously, $\Omega^\star(f)$ is a
pre-conjugate, as $\Omega$ is defined on $\cM(\cY)$, the topological dual of
continuous functions $\cC(\cY)$. For a comprehensive and rigorous treatment of
the theory of conjugation in infinite spaces, and in particular Banach spaces
as is the case of $\cC(\cY)$, see \citet{mitter_convex_2008}.

As $\Omega$ is strictly convex, $\Omega^\star$ is differentiable everywhere and
we have, from a Danskin theorem \cite{danskin_theory_1966} with left Banach
space and right compact space~\citep[Theorem C.1]{bernhard_variations_1990}:
\begin{equation*}
    \nabla \Omega^\star(f) \triangleq \argmax_{\alpha \in \probspace} 
    \langle \alpha, f \rangle - \Omega(\alpha) \in \contspace.
\end{equation*}
That gradient can be used as a link $\psi$ from $f \in \contspace$ to $\alpha
\in \probspace$.  It can also be interpreted as a regularized prediction
function \citep{blondel_learning_2018, mensch_differentiable_2018}.
Following the FY loss framework, we define the loss associated with
$\nabla \Omega^\star$ by
\begin{equation}\label{eq:fy_loss}
    \ell_\Omega(\alpha, f) \triangleq \Omega^\star(f) + \Omega(\alpha) - 
    \langle \alpha, f \rangle.
\end{equation}
In the discrete single-label setting, that loss is also related to
the construction of \citet[Proposition 3]{duchi_multiclass_2016}.
From the Fenchel-Young theorem \cite{rockafellar_convex_1970},
$\ell_\Omega(\alpha, f) \ge 0$,
with equality if and only if $\alpha = \nabla \Omega^\star(f)$. 
The loss $\ell_\Omega$ is
thus positive, convex and differentiable in its second argument, and minimizing it amounts to find
the pre-image $f^\star$ of the target distribution $\alpha$ with respect to the
link (mapping) $\nabla \Omega^\star$. 

Our construction is a generalization of the Fenchel-Young loss framework
\citep{blondel_learning_2018,fy_losses_journal}, in the sense that
it relies on topological duality between $\contspace$ and
$\probspace$, instead of the Hilbertian structure of $\RR^d$ and $\Simplex^d$,
to construct the loss $\ell_\Omega$ and link function $\nabla \Omega^\star$.
We now instantiate the Fenchel-Young loss \eqref{eq:fy_loss} 
with Sinkhorn negentropies in order to obtain a novel cost-sensitive loss.

\subsection{A new geometrical loss and softmax}

The key ingredients to derive a Fenchel-Young loss $\ell_\Omega$ and a link
$\nabla \Omega^\star$ are the conjugate $\Omega^\star$ and its gradient.
Remarkably, they enjoy a simple form with Sinkhorn negentropies, as shown in the following 
proposition. 

\begin{proposition}[Conjugate of the Sinkhorn negentropy]\label{prop:sinkhorn}
    For all $f \in \cC(\cY)$,
    the Legendre-Fenchel conjugate $\Omega^\star$ of $\Omega$ defined in
    \eqref{eq:Sinkhorn_negentropy} and its gradient read
    \normalfont
    \begin{align}
        \text{g-LSE}(f) &\triangleq \Omega^\star(f) = - \log \min_{\alpha \in \probspace} \Phi(\alpha, f) \\
        \text{g-softmax}(f) &\triangleq \nabla \Omega^\star(f) = \argmin_{\alpha \in \probspace} \Phi(\alpha, f)\\
        \text{where} \quad \Phi(\alpha, f) &\triangleq \langle \alpha \otimes \alpha, \exp(-\frac{f \oplus f + C}{2})\rangle\label{eq:quadratic_main}
    \end{align}
    \textit{and where g stands for geometric and LSE for log-sum-exp.}
\end{proposition}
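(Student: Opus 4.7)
The plan is to characterize the minimizer $\alpha^\star$ of $\Phi(\cdot,f)$ over $\probspace$ through its KKT conditions, identify it with $\nabla\Omega^\star(f)$ via Fenchel--Young equality, and then compute $\Omega^\star(f)$ using an auxiliary identity for $\Omega$ at its Sinkhorn potential. Existence of $\alpha^\star$ follows from weak$^\star$ compactness of $\probspace$ and weak$^\star$ continuity of the quadratic form $\Phi(\cdot,f)$, while convexity (positive quadratic form) makes the KKT conditions sufficient.

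First I would differentiate $\Phi$ in $\alpha$ under the constraint $\int \alpha = 1$, and pin the Lagrange multiplier by integrating the stationarity condition against $\alpha^\star$, yielding
\begin{equation*}
\int \alpha^\star(y')\,\exp\bigl(-\tfrac{f(y)+f(y')+C(y,y')}{2}\bigr)\,dy' \;=\; \Phi(\alpha^\star,f),
\end{equation*}
with equality on $\supp(\alpha^\star)$ and $\geq$ off the support. Setting $\tilde f := f + \log \Phi(\alpha^\star,f)$ and factoring out $e^{-f(y)/2}$, the on-support equality rewrites as $e^{\tilde f(y)/2} = \int \alpha^\star(y')\, e^{-(\tilde f(y') + C(y,y'))/2}\,dy'$, which is exactly the symmetric Sinkhorn self-duality $-\tilde f = T(-\tilde f,\alpha^\star)$. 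By the uniqueness of the symmetric Sinkhorn potential and its $T$-extrapolation \citep{feydy_interpolating_2018}, this forces $\tilde f = \nabla\Omega(\alpha^\star)$ on $\supp(\alpha^\star)$, while the off-support inequality translates into $f \leq \nabla\Omega(\alpha^\star) - \log \Phi(\alpha^\star,f)$.

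These two properties are exactly the subdifferential condition $f \in \partial \Omega(\alpha^\star)$, since $\Omega$ is finite only on $\probspace$ (on-support equality modulo a global constant, off-support inequality). Fenchel--Young then gives $\alpha^\star = \nabla \Omega^\star(f)$ (the g-softmax identity) and $\Omega^\star(f) = \langle \alpha^\star, f\rangle - \Omega(\alpha^\star)$. The last step uses the auxiliary identity $\Omega(\alpha) = \langle \alpha, \nabla\Omega(\alpha)\rangle$: multiplying the Sinkhorn equation defining $\nabla\Omega(\alpha)$ by $\alpha(y)\,e^{-\nabla\Omega(\alpha)(y)/2}$ and integrating gives $\Phi(\alpha, \nabla\Omega(\alpha)) = 1$, and substituting the choice $-\nabla\Omega(\alpha)$ of dual potential into \eqref{eq:dual_kantorovich} collapses the log-term and yields the identity. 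Combined with the on-support equality $f = \nabla\Omega(\alpha^\star) - \log \Phi(\alpha^\star,f)$ (which is all that matters in the $\alpha^\star$-pairing), we obtain $\Omega^\star(f) = -\log \Phi(\alpha^\star,f) = -\log \min_\alpha \Phi(\alpha,f)$. The main technical obstacle I anticipate is making the KKT analysis and the Sinkhorn extrapolation fully rigorous in the continuous measure setting; both are controlled by the weak$^\star$ framework developed earlier in the section and by the continuity of $T$ inherited from that of $C$.
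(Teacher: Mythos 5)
Your proposal is correct in its main steps, but it takes a genuinely different route from the paper. The paper computes $\Omega^\star$ \emph{directly}: it applies the change of variable $\mu = \alpha\,e^{f/2}$ (due to Feydy et al.) to rewrite $\Omega$ as a maximization over $\cM^+(\cY)$, swaps the maxima, uses the closed-form conjugate of the relative entropy over $\probspace$, and reverts the change of variable to arrive at $\Omega^\star(f) = -\log\min_\alpha\Phi(\alpha,f)$; only \emph{afterwards} does it identify the gradient via a Danskin argument combined with the first-order optimality conditions (the paper's Lemma~3). You instead start from the first-order/KKT analysis of $\min_\alpha\Phi(\alpha,f)$, show that the resulting on-support equality and off-support inequality amount to $f \in \partial\Omega(\alpha^\star)$ (via the extrapolated Sinkhorn potential of $\alpha^\star$), invoke Fenchel--Young to get $\alpha^\star = \nabla\Omega^\star(f)$, and then recover the value $\Omega^\star(f)$ from the auxiliary identity $\Omega(\alpha)=\langle\alpha,\nabla\Omega(\alpha)\rangle$ (itself a consequence of $\Phi(\alpha,\nabla\Omega(\alpha))=1$ and the homogeneous dual~\eqref{eq:dual_kantorovich}). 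In effect you prove the gradient claim first and deduce the value claim from it, whereas the paper does the opposite. Your route avoids the change-of-variable manipulations but requires (i) the subdifferential characterization of $\Omega$ at a boundary point of $\probspace$ --- your bridge from ``equality on support modulo constant, $\leq$ off support'' to $f\in\partial\Omega(\alpha^\star)$ should be spelled out using the convexity inequality $\Omega(\beta)\geq\Omega(\alpha^\star)+\langle\nabla\Omega(\alpha^\star),\beta-\alpha^\star\rangle$ together with $\langle\nabla\Omega(\alpha^\star)+c-f,\beta\rangle\geq 0$ --- and (ii) the auxiliary identity, which the paper's proof never needs. Both are correct, and your version also re-derives (rather than cites) the content of the paper's Lemma~3, making the first-order structure more transparent at the cost of a longer argument.
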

The proof can be found in~\autoref{app:prop_sinkhorn}. 
$\nabla \Omega^\star(f)$ is the usual
Fréchet derivative of $\Omega^\star$, that lies a priori in the topological dual
space of $\contspace$, i.e. $\cM(\cY)$. From a Danskin
theorem~\citep{bernhard_variations_1990}, it is in fact a probability measure. The
probability distribution $\alpha = \nabla \Omega^\star(f)$ is typically \textit{sparse},
as the minimizer of a quadratic on a convex subspace of $\cM(\cY)$. 
We call the loss $\ell_\Omega$ generated by the Sinkhorn negentropy
\textbf{g-logistic} loss.

\paragraph{Special cases.}

Let $\cY = [d]$ and $C = 1 - I_{d \times d}$ ($0$-$1$ cost matrix).  From
\autoref{prop:asymptotic}, $\Omega$ asymptotically recovers the negative Shannon
entropy when $\Omega= \Omega_{\frac{C}{\varepsilon}}$ as $\varepsilon \to 0$ and the
negative Gini index when $\Omega = \varepsilon \Omega_{\frac{C}{\varepsilon}}$, as
$\varepsilon \to \infty$. $\nabla \Omega^*$ is then equal to 
$\text{softmax}(f) \triangleq \frac{\exp f}{\sum_i
\exp f_i}$, and to $\text{sparsemax}(f) \triangleq \argmin_{\alpha \in \triangle^d}
\|\alpha - f\|^2$ \citep{martins_softmax_2016}, respectively.  Likewise,
$\ell_\Omega$ recovers the logistic and sparsemax losses. When 
$\varepsilon \to 0$, because
$(\frac{C}{\varepsilon})_{y, y'} = \infty$ if $y \neq y'$ and $0$ otherwise, we see
that the logistic loss infinitely penalizes inter-class errors. That is, to
obtain zero logistic loss, the model must assign probability $1$ to the correct
class. The limit case $\varepsilon \to 0$ is the only case for which g-softmax
always outputs completely dense distributions. In the continuous case,
 $\varepsilon \Omega^\star_{C / \varepsilon}(f / \varepsilon)$
degenerates into a positive deconvolution objective with simplex constraint:
\begin{equation}
\max_{\alpha \in \probspace} \langle \alpha, f \rangle - \frac{1}{2} \langle \alpha \otimes \alpha, -C \rangle.
\end{equation}
Fig. \ref{fig:distance} shows that $\nabla \Omega^\star$ has indeed a deconvolutional effect.

\begin{figure*}
    \includegraphics[width=\textwidth]{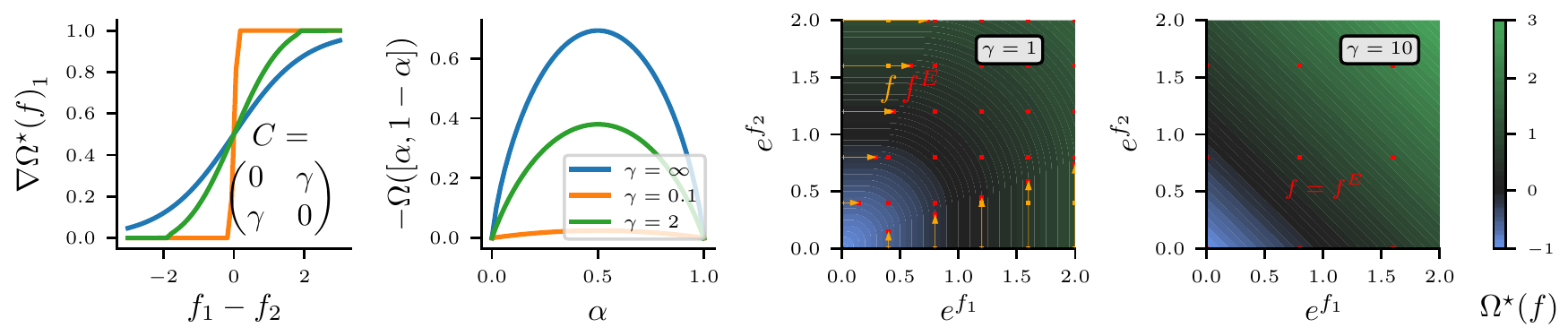}
    \vspace*{-2em}
    \caption{Left: Geometric softmax and Sinkhorn entropy, for 
    symmetric cost matrices, in the binary case. Predictions from
the g-softmax are sparse, as the minimizer of a convex quadratic on the simplex. Right: Level sets of 
    the geometric conjugate. Introducing a cost matrix induces a deformation $\Simplex^2$, the level-set of the log-sum-exp operator, onto the set of symmetric Sinkhorn potentials $\cF$. The geometric conjugate defines an extrapolation operator $f \to f^E$ that replaces the score function onto the cylinder $\cF + \RR 1$.}\label{fig:2d}
    \vspace*{-1em}
\end{figure*}

\subsection{Computation}\label{sec:computation}

Before studying the g-logistic loss $\ell_\Omega$ and link function 
$\nabla \Omega^\star(f)$, we now describe practical algorithms for computing
 $\nabla \Omega^\star(f)$ and $\Omega^\star(f)$ in the discrete and continuous cases. The
key element in using the g-LSE as a layer in an arbitrary complex model is to
minimize the quadratic function $\Phi_f \triangleq \Phi(\cdot, f)$, on $\probspace$. We can then use the minimum value in the forward pass, and the minimizer in the backward pass, during e.g. SGD training.

\paragraph{Continuous optimisation.} In the general case where $\cY$ is compact, we cannot represent $\alpha \in \probspace$ using a finite vector. Yet, we can use a Frank-Wolfe scheme to progressively add spikes, i.e. Diracs to an iterate sequence $\alpha_t$. For this, we need to compute, at each iteration, the gradient of $\Phi_f$ in the sense of measure~\eqref{eq:displacement}, i.e. the function in $\contspace$
\begin{equation}
    \nabla \Phi_f(\alpha) = \exp(-\frac{f + T(-f, \alpha)}{2}),
\end{equation}
that simply requires to compute the C-transform of $-f$ on the measure $\alpha$, similarly to regularized optimal transport. The simplest Frank-Wolfe scheme then updates
\begin{equation}
    y_t \in \argmin_{\cY} \nabla \Phi_f(\alpha_{t-1}), \:
    \alpha_{t} = \alpha_{t-1} + \frac{2}{t+2} (\delta_{y_t} - \alpha_{t-1}).
\end{equation}
Indeed, for $h \in \contspace$, the minimizer of $\langle h, \cdot \rangle$ on $\probspace$ is the Dirac $\delta_y$ where $y \in \cY$ minimizes $h$. This optimization scheme may be refined to ensure a geometric convergence
of $\Phi_f(\alpha_t)$. It can be used to identify Diracs from a continuous distance field $f$, similar to super-resolution approaches proposed in~\citet{bredies2013inverse,boyd2017alternating}. 
It requires to work with computer-friendly representation of $f$, so that we can obtain an approximation of $y_t$ efficiently, using e.g. non-convex optimization. Another approach is to rely on a deep parametrization of a particle swarm, as proposed by \citet{boyd2018deeploco}. We leave such an application for future work, and focus on an efficient \textit{discrete} solver for the g-LSE and g-softmax.

\paragraph{Discrete optimisation.}In the discrete case, we can parametrize 
$\log \Phi(f, \cdot)$ in logarithmic space, by
setting $\alpha = \exp(l) - \text{LSE}(l)$, with $l \in \RR^d$. 
$\Omega^\star(f)$ then reads
\begin{equation}\label{eq:non_convex}
    \max_{l \in \RR^d} - \log \sum_{i, j=1}^d e^{l_i + l_j -\frac{f_i + f_j + c_{i, j}}{2}} + 2 \text{LSE}(l).
\end{equation}
This objective is non-convex on $\RR^d$ but invariant with translation and convex on 
$\{h \in \RR^d, \text{LSE}(l) = 0\}$.
 It thus admits a unique solution, that we can find using an unconstrained quasi-Newton solver like L-BFGS~\cite{liu_limited_1989}, that we stop when the iterates are sufficiently stable.
 For $l$ that maximizes~\eqref{eq:non_convex}, the gradient $\nabla \Omega^\star(f) = \text{softmax}(l)$ is used for backpropagation and at test time. As $\nabla \Omega^\star(f)$ is sparse, we expect some coordinates $l_i$ to go to $-\infty$. In practice, $\alpha_i$ then underflows to $0$ after a few iterations.

\paragraph{Two-dimensional convolution.}In the discrete case, when dealing with two-dimensional potentials and measures, the objective function~\eqref{eq:non_convex}
can be written with a convolution operator, as 
$- \log \langle e^{l - \frac{f}{2}}, e^{-\frac{C}{2}} \star e^{l - \frac{f}{2}} \rangle$
where $C \in \RR^{(h \times w)^2}$. It is therefore efficiently computable and differentiable on GPUs, especially when the kernel $C$ is separable in height and width, e.g. for the $\ell_2^2$ norm,
in which case we perform 2 successive one-dimensional
convolutions. We use this computational trick in our variational auto-encoder experiments (\autoref{sec:exps}).

\section{Geometric and statistical properties}\label{sec:properties}

We start by studying the mirror map $\nabla \Omega^\star$, that we expect to invert the mapping $\alpha \to \nabla \Omega(\alpha)$. This study is necessary as we cannot rely on typical conjugate inversion results~\citep[e.g.,][Theorem 26.5]{rockafellar_convex_1970}, that would stipulate that $\nabla \Omega^\star = (\nabla \Omega)^{-1}$ on the domain of $\Omega^\star$. Indeed, this result is stated in finite dimension, and requires that $\Omega$ and $\Omega^\star$ be Legendre, i.e. be strictly convex and differentiable on their domain of definition, and have diverging derivative on the boundaries of these domains~\citep[see also][]{wainwright_graphical_2008}. This is not the case of the Sinkhorn negentropy, which requires novel adjustements.
With these at hands, we show that parametric models involving a final g-softmax layer can be trained to minimize a certain well-behaved Bregman divergence
on the space of probability measures. Proofs are reported in~\autoref{app:geometric} and~\autoref{app:statistic}

\subsection{Geometry of the link function}

We have constructed the link function $\nabla \Omega^\star$ in hope that it
would allow to go
from a symmetric Sinkhorn potential $f = \nabla \Omega(\alpha)$ back to the original
measure $\alpha$. The following lemma states that this is indeed the case,
and derives two consequences on the space of symmetric Sinkhorn potentials, defined
as $\cF \triangleq \{ f \in \contspace, f = \nabla \Omega(\alpha) \}$.

\begin{lemma}[Inversion of the Sinkhorn potentials]\label{lemma:inversion}
    \begin{align}
        &\forall\,\alpha \in \probspace,\:\nabla \Omega^\star \circ \nabla \Omega(\alpha) = \alpha. \\
        &\forall\,f \in \cF, \quad \nabla \Omega \circ \nabla \Omega^\star(f) = f,
        \quad \Omega^\star(f) = 0.
    \end{align}
\end{lemma}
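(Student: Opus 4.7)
The plan is to separate the three assertions and reduce all of them to two ingredients already in hand: the displacement derivative formula~\eqref{eq:displacement} together with the convexity of $\Omega$ on $\cM(\cY)$, and the explicit formula for $\Omega^\star$ from \autoref{prop:sinkhorn}. The key observation is that we do not need a full Legendre inversion theorem: once we exhibit $\alpha$ as a maximizer of $\beta \mapsto \langle \beta, f\rangle - \Omega(\beta)$ over $\probspace$, strict convexity of $\Omega$ on $\cM^+_1(\cY)$ pins down $\nabla\Omega^\star(f)$ uniquely.

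For the first identity, fix $\alpha \in \probspace$ and set $f = \nabla \Omega(\alpha)$. For any $\beta \in \probspace$, the segment $\alpha + t(\beta - \alpha) = (1-t)\alpha + t\beta$ stays in $\probspace$ for $t \in [0,1]$, so convexity of $\Omega$ yields
\begin{equation}
\frac{\Omega(\alpha + t(\beta - \alpha)) - \Omega(\alpha)}{t} \;\le\; \Omega(\beta) - \Omega(\alpha).
\end{equation}
Passing to $t \to 0^+$ and using the displacement derivative~\eqref{eq:displacement} with $\xi_1 = \alpha,\ \xi_2 = \beta$ turns the left-hand side into $\langle \beta - \alpha, f \rangle$. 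Rearranging gives
$\langle \alpha, f \rangle - \Omega(\alpha) \ge \langle \beta, f\rangle - \Omega(\beta)$ for every $\beta \in \probspace$, so $\alpha$ maximizes the Fenchel-conjugate objective; strict convexity of $\Omega$ on $\cM_1^+(\cY)$ makes this maximizer unique, hence $\nabla \Omega^\star(f) = \alpha$. The second identity then follows immediately: any $f \in \cF$ is of the form $\nabla \Omega(\alpha)$ for some $\alpha \in \probspace$, and applying the first identity gives $\nabla \Omega^\star(f) = \alpha$, so $\nabla \Omega \circ \nabla \Omega^\star(f) = \nabla \Omega(\alpha) = f$.

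For the value $\Omega^\star(f) = 0$, we use \autoref{prop:sinkhorn}: since $\nabla\Omega^\star(f) = \alpha$ realizes the minimum of $\Phi(\cdot,f)$, we have $\Omega^\star(f) = -\log \Phi(\alpha, f)$, so it suffices to show $\Phi(\alpha,f) = 1$. This is where the fixed-point characterization $-f = T(-f, \alpha)$ of the symmetric Sinkhorn potential enters: unpacking the soft $C$-transform gives
\begin{equation}
e^{f(y)/2} \;=\; \int_{\cY} e^{-(f(y') + C(y,y'))/2}\,\mathrm{d}\alpha(y') \qquad \forall y \in \cY.
\end{equation}
Multiplying both sides by $e^{-f(y)/2}$ and integrating against $\alpha$ produces
\begin{equation}
1 \;=\; \int_\cY\!\int_\cY e^{-(f(y)+f(y')+C(y,y'))/2}\,\mathrm{d}\alpha(y)\,\mathrm{d}\alpha(y') \;=\; \Phi(\alpha,f),
\end{equation}
and the third claim follows. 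The main obstacle, and the reason the paper flags this lemma, is the first identity: in infinite dimensions we cannot invoke standard Legendre inversion (the domain $\probspace$ has non-empty boundary inside $\cM(\cY)$ and $\Omega$ is $+\infty$ outside). The workaround above sidesteps this by deriving a one-sided subgradient inequality along admissible segments only, which is all we actually need because the maximization in the definition of $\Omega^\star$ is already constrained to $\probspace$.
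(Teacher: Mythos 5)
Your proof is correct, and it takes a genuinely different route from the paper's. The paper establishes this lemma by invoking Lemma~\ref{lemma:first_order} (the KKT-style optimality characterization of $\nabla\Omega^\star(f)$, proved separately via Lagrangian stationarity): since the symmetric Sinkhorn potential satisfies the fixed-point identity $T(-f,\alpha)=-f$, the quantity $\tfrac{f+T(-f,\alpha)}{2}$ vanishes identically on $\cY$, so $\alpha$ meets the sufficient optimality conditions of that lemma with $A=0$; this gives $\nabla\Omega^\star(f)=\alpha$ and, because $A$ is tied to $\Omega^\star(f)$, also $\Omega^\star(f)=0$, in one stroke. Your argument splits the work differently and avoids Lemma~\ref{lemma:first_order} entirely: for the first identity you run a bare-hands convexity argument --- the secant-slope inequality along an admissible segment in $\probspace$, combined with the displacement-derivative definition~\eqref{eq:displacement} of $\nabla\Omega$, yields the Fenchel subgradient inequality $\langle\alpha,f\rangle-\Omega(\alpha)\geq\langle\beta,f\rangle-\Omega(\beta)$ directly, and strict convexity gives uniqueness of the argmax. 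For $\Omega^\star(f)=0$ you compute $\Phi(\alpha,f)=1$ explicitly by unwinding the fixed-point equation $-f=T(-f,\alpha)$ and integrating. Both routes ultimately hinge on that same fixed-point characterization of the symmetric Sinkhorn potential, but yours is more elementary and self-contained, while the paper's is more economical in context since Lemma~\ref{lemma:first_order} is reused to prove Proposition~\ref{prop:extrapolation} anyway. The second identity is handled identically in both (trivial once the first is known). Your closing remark about why standard Legendre inversion fails here and why the one-sided subgradient argument along segments inside $\probspace$ suffices is exactly the right observation.
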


The computation of the Sinkhorn potential thus inverts the g-LSE operator on the
space $\cF$, which is included in the $0$-level set of $\Omega^\star$. This is
similar to the set $\cF_{\text{Shannon}} = \{ \log \alpha, \alpha \in \Simplex^d
\}$ being the $0$ level set of the log-sum-exp function when using the Shannon
negentropy as $\Omega$.

This corollary is not sufficient for our purpose, as we want to characterize the action of
$\nabla \Omega^\star$ on \textit{all continuous functions} $f \in \contspace$. For this, note that the g-LSE operator $\Omega^\star$ has the same behavior as the log-sum-exp when composed with the addition of a constant $c \in \RR$:
\begin{equation}\label{eq:translation}
    \Omega^\star(f + c) = \Omega^\star(f) + c,\quad \nabla \Omega^\star(f + c) = \nabla \Omega^\star(f).
\end{equation}
Therefore, for all $f \in \contspace$, $\Omega^\star(f- \Omega^\star(f)) = 0$,
which almost makes $f - \Omega^\star(f)$ a part of the space of potentials~$\cF$. Yet, in contrast with the Shannon entropy case, the inclusion of $(\Omega^\star)^{-1}({0})$ in $\cF$ is strict. Indeed, following~\autoref{sec:sinkhorn_entropy}
$f \in \cF$ implies that there exists $\alpha \in \probspace$ such that $f = -T(-f, \alpha)$ is the image of the C-transform operator. The operator $\nabla \Omega \circ \nabla \Omega^\star$ has therefore an \textit{extrapolation} effect, as it replaces $f - \Omega^\star(f)$ onto the set of Sinhorn potentials. This is made clear by the following proposition.

\begin{proposition}[Extrapolation effect of $\nabla \Omega \circ \nabla \Omega^\star$]%
    \label{prop:extrapolation}
    For all $f \in \contspace$, we define the extrapolation of $f$ to be 
    \begin{equation}
        f^E \triangleq -T \big( - ( f - \Omega^\star(f)), \nabla \Omega^\star(f)\big) + \Omega^\star(f).
    \end{equation}
    Then, for all $f \in \contspace$, $\nabla \Omega \circ \nabla \Omega^\star(f) = f^E - \Omega^\star(f).$
\end{proposition}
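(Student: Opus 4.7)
The plan is to reduce the claim, via the translation invariance~\eqref{eq:translation}, to a fixed-point characterization of the symmetric Sinkhorn potential, and then verify this fixed point by first-order optimality in the conjugate problem of \autoref{prop:sinkhorn}. Setting $\tilde f \triangleq f - \Omega^\star(f)$ and $\alpha \triangleq \nabla \Omega^\star(f)$, \eqref{eq:translation} gives $\Omega^\star(\tilde f) = 0$ and $\nabla \Omega^\star(\tilde f) = \alpha$, so that the definition of $f^E$ collapses to $f^E - \Omega^\star(f) = -T(-\tilde f, \alpha)$. It therefore suffices to show that $\nabla \Omega(\alpha) = -T(-\tilde f, \alpha)$.

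To identify this quantity with $\nabla \Omega(\alpha)$, I would use the characterization recalled after~\eqref{eq:displacement}: $\nabla \Omega(\alpha)$ is the unique $h \in \contspace$ satisfying $-h = T(-h, \alpha)$. Let $h \triangleq -T(-\tilde f, \alpha)$. Since $T(\cdot, \alpha)$ only involves values of its first argument on $\supp(\alpha)$, it is enough to prove the pointwise equality $h = \tilde f$ on $\supp(\alpha)$: then $T(-h, \alpha) = T(-\tilde f, \alpha) = -h$ everywhere on $\cY$, and uniqueness concludes that $h = \nabla \Omega(\alpha)$.

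The pointwise equality on $\supp(\alpha)$ will come from first-order optimality in the problem defining $\Omega^\star$. Since $\alpha$ minimizes $\Phi(\cdot, \tilde f)$ over $\probspace$ with minimal value $\Phi(\alpha, \tilde f) = e^{-\Omega^\star(\tilde f)} = 1$, the KKT conditions for the simplex constraint produce a Lagrange multiplier $\lambda \in \RR$ such that the first variation
\begin{equation}
    \nabla_\alpha \Phi(\alpha, \tilde f)(y) = 2\, e^{-(\tilde f(y) + T(-\tilde f, \alpha)(y))/2},
\end{equation}
obtained by differentiating~\eqref{eq:quadratic_main} in the sense of~\eqref{eq:displacement} and rewriting the resulting inner integral through the operator $T$, equals $\lambda$ on $\supp(\alpha)$ and is $\geq \lambda$ elsewhere. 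Because $\Phi(\cdot, \tilde f)$ is $2$-homogeneous in $\alpha$, Euler's identity yields $\langle \alpha, \nabla_\alpha \Phi(\alpha, \tilde f)\rangle = 2\Phi(\alpha, \tilde f) = 2$, whence $\lambda = 2$. Substituting back forces $\tilde f(y) + T(-\tilde f, \alpha)(y) = 0$ on $\supp(\alpha)$, i.e.\ $\tilde f = h$ there, closing the argument.

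The main obstacle I expect is justifying the KKT first-order conditions rigorously on the infinite-dimensional simplex $\probspace$: complementary slackness between a continuous function $\nabla_\alpha \Phi$ and the Radon measure $\alpha$ requires some care so that the pointwise identification on $\supp(\alpha)$ is meaningful. The explicit formula above makes the continuity of the first variation manifest, so the weak$^\star$-based KKT machinery transfers from the discrete case without additional surprises.
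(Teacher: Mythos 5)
Your proof is correct and follows essentially the same route as the paper's: both reduce, via the translation invariance \eqref{eq:translation}, to showing that $g \triangleq -T(-\tilde f, \alpha)$ is the symmetric Sinkhorn potential of $\alpha = \nabla\Omega^\star(f)$, and both establish this by using first-order optimality on $\supp\alpha$ (so that $\tilde f + T(-\tilde f,\alpha) = 0$ there) combined with the fact that $T(\cdot,\alpha)$ only reads its first argument on $\supp\alpha$. The only difference is presentational: you re-derive the KKT/support condition inline (pinning the multiplier via Euler's identity for the $2$-homogeneity of $\Phi(\cdot,\tilde f)$) and invoke the uniqueness characterization $-h = T(-h,\alpha)$ directly, whereas the paper delegates those steps to Lemma~\ref{lemma:first_order}, Lemma~\ref{lemma:inversion}, and Lemma~\ref{lemma:properties}(ii).
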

The extrapolation operator translates $f$ to $(\Omega^\star)^{-1}({0})$,
extrapolates $f - \Omega^\star(f)$ so that it becomes a Sinkhorn potential, then
translates back the result so that $\Omega^\star(f^E) = \Omega^\star(f)$. Its effects clearly appears on \autoref{fig:2d} (right), where we see that $f^E$ is a projection of $f$ onto the cylinder $\cF + \RR$.

\subsection{Relation to Hausdorff divergence}

Recall that the Bregman divergence \citep{bregman_1967} generated by a strictly
convex $\Omega$ is defined as
\begin{equation}
    D_\Omega(\alpha, \beta) \triangleq
    \Omega(\alpha) - \Omega(\beta) - \langle \nabla \Omega(f), \alpha - \beta).
\end{equation}
When $\Omega$ is the classical negative Shannon entropy $\Omega(\alpha) =
\langle \alpha, \log \alpha \rangle$, it is well-known that $D_\Omega$ equals
the Kullback-Leibler divergence and it is easy to check that
\begin{equation}
    \ell_\Omega(\alpha, f) = D_\Omega(\alpha, \nabla \Omega^\star(f)) =
    \text{KL}(\alpha, \text{softmax}(f)).
\end{equation}
The equivalence between Fenchel-Young loss $\ell_\Omega(\alpha, f)$ and
composite Bregman divergence $D_\Omega(\alpha, \nabla \Omega^\star(f))$,
however, no longer holds true when $\Omega$ is the Sinkhorn negentropy defined
in \eqref{eq:Sinkhorn_negentropy}. In that case, $D_\Omega$ can be
interpreted as an \textit{asymmetric} Hausdorff divergence
\citep{aspert2002mesh,feydy_interpolating_2018}. It forms a geometric divergence akin to OT distances, and estimates the distance between distribution supports. 
As we now show, $\ell_\Omega$ provides an upper-bound on
the composition of that divergence with $\nabla \Omega^\star$.
\begin{proposition}[$\ell_\Omega$ upper-bounds Hausdorff divergence]\label{prop:upper_bound}
    \begin{align}
        &D_\Omega(\alpha, \nabla \Omega^\star(f)) = \ell_\Omega( \alpha, f^E) \\
            &= \ell_\Omega( \alpha, f) - \langle \alpha,
            f^E - f\rangle \leq \ell_\Omega( \alpha, f)
    \end{align}
    with equality if $\text{\upshape supp}\, \nabla \Omega^\star(f) = \text{\upshape supp}\, \alpha$.
\end{proposition}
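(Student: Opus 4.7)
The plan is to exploit the extrapolation formula from \autoref{prop:extrapolation} together with the translation invariance in \eqref{eq:translation}, reducing the two identities to Fenchel-Young equalities at carefully chosen pairs, and then derive the inequality from the KKT conditions of the quadratic program that defines $\nabla \Omega^\star(f)$.

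I would first set $\beta \triangleq \nabla \Omega^\star(f)$ and use \autoref{prop:extrapolation} together with \autoref{lemma:inversion} to prepare two identities: $\Omega^\star(f^E) = \Omega^\star(f)$ and $\nabla \Omega^\star(f^E) = \beta$. Both follow from writing $f^E = \nabla \Omega(\beta) + \Omega^\star(f)$, observing that $\nabla\Omega(\beta) \in \cF$, and combining the translation relations $\Omega^\star(g + c) = \Omega^\star(g) + c$ and $\nabla \Omega^\star(g + c) = \nabla \Omega^\star(g)$ with $\Omega^\star|_{\cF} = 0$ and $\nabla \Omega^\star \circ \nabla \Omega = \mathrm{Id}$ from \autoref{lemma:inversion}. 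An immediate corollary is the Fenchel-Young equality at the pair $(\beta, f^E)$, i.e. $\langle \beta, f^E\rangle = \Omega(\beta) + \Omega^\star(f^E)$.

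With these in hand, the middle equality is immediate: subtracting the definitions of $\ell_\Omega(\alpha, f)$ and $\ell_\Omega(\alpha, f^E)$ gives $\ell_\Omega(\alpha, f^E) - \ell_\Omega(\alpha, f) = -\langle \alpha, f^E - f\rangle$, since the $\Omega^\star$-terms cancel. For the first equality, I would plug $\nabla\Omega(\beta) = f^E - \Omega^\star(f)$ into $D_\Omega(\alpha,\beta)$ and use $\langle \mathbf{1}, \alpha - \beta\rangle = 0$ so that the constant $\Omega^\star(f)$ drops out; the Fenchel-Young equality at $(\beta, f^E)$ from the previous step then converts $\langle f^E, \beta\rangle - \Omega(\beta)$ into $\Omega^\star(f^E)$, producing $\ell_\Omega(\alpha, f^E)$.

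The main obstacle, and the only genuinely non-routine step, is the inequality $\ell_\Omega(\alpha, f^E) \leq \ell_\Omega(\alpha, f)$, equivalent by the middle identity to the pointwise domination $f^E(y) \geq f(y)$ on $\cY$, with equality on $\supp \beta$. To prove this I would write out the KKT conditions for $\beta = \argmin_{\alpha \in \probspace}\Phi(\alpha,f)$: the gradient in the measure sense is $\nabla_\alpha\Phi(\beta, f)(y) = 2\, e^{-(f(y) + T(-f,\beta)(y))/2}$, so stationarity yields a Lagrange multiplier $\mu > 0$ with $\nabla_\alpha\Phi(\beta,f) \geq \mu$ everywhere and equality on $\supp \beta$. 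Integrating the equality case against $\beta$ identifies $\mu = 2\,\Phi(\beta,f) = 2\, e^{-\Omega^\star(f)}$. Pulling the constant $\Omega^\star(f)$ out of the log-sum-exp defining $f^E$ yields the explicit form $f^E = 2\Omega^\star(f) - T(-f,\beta)$; substituting the KKT bound then gives $f^E \geq f$ pointwise, with equality on $\supp \beta$. Integrating against the positive measure $\alpha$ concludes both the inequality and the equality case when $\supp \alpha = \supp \beta$. The subtle point is phrasing these KKT conditions in the measure-theoretic sense on the compact space $\cY$, but the Fenchel-Rockafellar and Danskin machinery already invoked in the excerpt supplies the requisite regularity.
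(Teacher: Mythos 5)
Your argument is correct and follows essentially the same route as the paper's own proof: both rely on the extrapolation identity $\nabla\Omega\circ\nabla\Omega^\star(f) = f^E - \Omega^\star(f)$, the translation relations $\Omega^\star(f+c)=\Omega^\star(f)+c$, and the pointwise inequality $f^E \ge f$ (with equality on $\supp\nabla\Omega^\star(f)$) to decompose $D_\Omega$ as $\ell_\Omega(\alpha,f) - \langle\alpha,f^E-f\rangle$. The only organizational difference is that you establish $D_\Omega(\alpha,\beta)=\ell_\Omega(\alpha,f^E)$ first via the Fenchel--Young equality at $(\beta,f^E)$ and then relate it to $\ell_\Omega(\alpha,f)$, whereas the paper adds and subtracts terms to write $D_\Omega = \ell_\Omega(\alpha,f) + \langle f - \nabla\Omega\circ\nabla\Omega^\star(f), \alpha - \nabla\Omega^\star(f)\rangle$ and then simplifies the correction; and you re-derive the domination $f^E\ge f$ directly from the KKT system of $\min_{\alpha\in\probspace}\Phi(\alpha,f)$, whereas the paper cites it as an already-established property of the extrapolation operator (itself proved via the same KKT conditions). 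So the content is the same, merely packaged differently.
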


In contrast with the KL divergence, the asymmetric Hausdorff divergence is finite even 
when $\supp \alpha \neq \supp \beta$, a geometrical property that it shares with optimal transport divergences.
We now use \autoref{prop:upper_bound} to derive a new consistency result
justifying our loss.  Let
us assume that input features and output distributions follow a distribution 
$\cD
\in \cM^+_1(\cX \times \cM^+_1(\cY))$. We define the Hausdorff divergence risk
and the Fenchel-Young loss risk as
\begin{equation*}
    \cE(\beta) \triangleq \EE[D_\Omega(\alpha,
    \beta(x))]
    \quad \text{and} \quad
    \cR(g) \triangleq \EE[\ell_\Omega( \alpha, g(x))],
\end{equation*}
where the expectation is taken w.r.t. $(x,\alpha) \sim \cD$.
We define their associated Bayes estimators as
\begin{equation*}
    \beta^\star \triangleq \argmin_{\beta \colon \cX \to \probspace}
    \cE(\beta) 
    \quad \text{and} \quad
    g^\star \triangleq \argmin_{g \colon \cX \to \contspace}
    \cR(g).
\end{equation*}
The next proposition guarantees calibration of $\ell_\Omega$ with respect to the
asymmetric Hausdorff divergence $D_\Omega$. 

\begin{proposition}[Calibration of the g-logistic loss]
\label{prop:bregman} 

The g-logistic loss $\ell_\Omega$ where $\Omega$ is defined in
\eqref{eq:Sinkhorn_negentropy} is Fisher consistent with
the Hausdorff divergence $D_\Omega$ for the same $\Omega$.
That is,
\begin{align}
    \cE(\beta^\star) = \cR(g^\star),
    \quad\text{with}\quad 
    g^\star = \nabla \Omega \circ \beta^\star.
\end{align}
The excess of risk in the Hausdorff divergence is controlled by
the excess of risk in the g-logistic loss. For all 
$g: \cX \to \contspace$, we have
\begin{equation}
    \cE(\nabla \Omega^\star \circ g) - \cE(\beta^\star) \leq
    \cR(g) - \cR(g^\star).
\end{equation}
\end{proposition}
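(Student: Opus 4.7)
The plan is to push the pointwise inequality $\ell_\Omega(\alpha, f) \geq D_\Omega(\alpha, \nabla\Omega^\star(f))$ from Proposition~\ref{prop:upper_bound} through the expectation under $\cD$, and then to exhibit a predictor that makes the resulting risk-level inequality tight. Integrating that pointwise bound over $(x, \alpha) \sim \cD$ immediately gives, for every measurable $g : \cX \to \contspace$,
\begin{equation*}
\cR(g) = \EE[\ell_\Omega(\alpha, g(x))] \geq \EE[D_\Omega(\alpha, \nabla\Omega^\star(g(x)))] = \cE(\nabla\Omega^\star \circ g) \geq \cE(\beta^\star),
\end{equation*}
where the last inequality uses optimality of $\beta^\star$ for $\cE$. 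Hence $\inf_g \cR(g) \geq \cE(\beta^\star)$ already holds.

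I would then check that this bound is actually attained by $g^\star \triangleq \nabla\Omega \circ \beta^\star$. By construction $g^\star(x) \in \cF$, so Lemma~\ref{lemma:inversion} yields $\nabla\Omega^\star(g^\star(x)) = \beta^\star(x)$ and $\Omega^\star(g^\star(x)) = 0$. Feeding these two equalities into the definition of the extrapolation in Proposition~\ref{prop:extrapolation}, together with the characterization $-f = T(-f, \alpha)$ of a symmetric Sinkhorn potential recalled in \autoref{sec:sinkhorn_entropy}, gives $(g^\star(x))^E = g^\star(x)$. The slack term $\langle \alpha, f^E - f \rangle$ in Proposition~\ref{prop:upper_bound} therefore vanishes pointwise, so $\ell_\Omega(\alpha, g^\star(x)) = D_\Omega(\alpha, \beta^\star(x))$ for $\cD$-almost every $(x, \alpha)$, and integrating yields $\cR(g^\star) = \cE(\beta^\star)$. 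This proves the first identity and identifies $g^\star$ as a Bayes predictor for $\cR$.

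The calibration inequality is then immediate: combining $\cR(g) \geq \cE(\nabla\Omega^\star \circ g)$ with $\cR(g^\star) = \cE(\beta^\star)$ gives
\begin{equation*}
\cR(g) - \cR(g^\star) \geq \cE(\nabla\Omega^\star \circ g) - \cE(\beta^\star),
\end{equation*}
as claimed.

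The main subtlety is the support condition in Proposition~\ref{prop:upper_bound}: in general, equality between $\ell_\Omega(\alpha, f)$ and $D_\Omega(\alpha, \nabla\Omega^\star(f))$ requires $\supp \nabla\Omega^\star(f) = \supp \alpha$, which need not hold for an arbitrary predictor. The argument sidesteps this by choosing $f = g^\star(x)$ in the range $\cF$ of $\nabla\Omega$, where $f^E = f$ holds unconditionally and the slack is automatically zero, regardless of the support of $\alpha$. A routine measurability check on $g^\star = \nabla\Omega \circ \beta^\star$, inherited from the weak$^\star$ continuity of $\nabla\Omega$ established in the proof of Lemma~\ref{lemma:inversion}, is all that is needed to legitimize the integrations above.
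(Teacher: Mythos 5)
Your proof is correct and follows essentially the same route as the paper. Both arguments rest on the same two pillars: (i) the pointwise inequality $D_\Omega(\alpha,\nabla\Omega^\star(f)) \leq \ell_\Omega(\alpha,f)$ from Proposition~\ref{prop:upper_bound}, integrated to give $\cE(\nabla\Omega^\star\circ g)\leq\cR(g)$, and (ii) the observation that for $f=\nabla\Omega(\hat\alpha)\in\cF$ the slack $\langle\alpha, f^E-f\rangle$ vanishes because $\Omega^\star(f)=0$ and $-T(-f,\hat\alpha)=f$ (Lemma~\ref{lemma:inversion}), forcing $f^E=f$ and hence $D_\Omega(\alpha,\hat\alpha)=\ell_\Omega(\alpha,\nabla\Omega(\hat\alpha))$; this makes the bound tight at $g^\star=\nabla\Omega\circ\beta^\star$. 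The paper presents step (ii) as an identity between $D_\Omega$ and $\ell_\Omega\circ(\mathrm{id}\times\nabla\Omega)$ valid for all pairs, while you obtain the same equality specifically at $\beta^\star$; these are logically interchangeable. One small improvement you correctly make explicit is why the support condition in Proposition~\ref{prop:upper_bound} is harmless: because $f=g^\star(x)$ lies in $\cF$ the slack is identically zero, not merely zero on $\supp\alpha$. The measurability remark is a reasonable technical note that the paper leaves implicit, though attributing weak$^\star$ continuity of $\nabla\Omega$ to the proof of Lemma~\ref{lemma:inversion} is a slight stretch — the paper never isolates that claim.
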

This result, that follows the terminology of~\citet{tewari_consistency_2005}, shows that $\ell_\Omega$ is suitable for learning predictors that minimize $D_\Omega$.

\section{Applications}\label{sec:exps}

We present two experiments that demonstrate the validity and usability
of the geometric softmax in practical use-cases. We provide a PyTorch package for reusing the discrete geometric softmax layer\footnote{\url{github.com/arthurmensch/g-softmax}}.

\subsection{Ordinal regression}

We first demonstrate the g-softmax for
\textit{ordinal regression}. In this setting, we wish to predict an ordered
category among $d$ categories, and we assume that the cost of predicting $\hat y$
 instead of $y$ is symmetric and depends on the difference between $\hat y$ and
$y$. For instance, when predicting ratings, we may have three categories
\textit{bad $\prec$ average $\prec$ good}. This is typically modeled by a
cost-function $C(\hat y, y) = \phi(| \hat y - y |)$, where $\phi$ is the
$\ell_2^2$ or $\ell_1$ cost.
We use the real-world ordinal datasets provided by
\citet{gutierrez_ordinal_2016}, using their predefined 30 cross-validation
folds.

\paragraph{Experiment and results.} 

We study the performance of the geometric
softmax in this discrete setting, where the score function is assumed to be a
linear function of the input features $x \in \RR^k$, i.e, $g_{W, b}(x) = W x +
b$, with $b \in \RR^d$, $x \in \RR^k$ and $W \in \RR^{d \times k}$. We compare
its performance to multinomial regression, and to immediate threshold and
all-threshold logistic regression~\cite{rennie_loss_2005}, using a reference
implementation provided by \citet{pedregosa_consistency_2017}. We use a
cross-validated $\ell_2$ penalty term on the linear score model~$g_\theta$. To
compute the Hausdorff divergence at test time and the geometric loss during
training, we set $C(\hat y, y) = (\hat y - y)^2 / 2$.

\begin{table}
    \caption{Performance of geometric loss as a drop-in replacement in linear
        models for ordinal regression. Our method performs better w.r.t.\ its natural
    metric, the Hausdorff divergence.}
    \vspace{.5em}

    \label{table:ordinal}
    {\footnotesize
    \begin{tabular}{lllll}
        \toprule
        {} &             LR &         LR(AT) &         LR(IT) &       \textbf{g-logistic} \\
        \midrule
        Haus. div. &  $.46{\pm}.12$ &  $.47{\pm}.14$ &  $.59{\pm}.16$ &  $\mathbf{.44{\pm}.08}$ \\
        MAE         &  $.44{\pm}.09$ &  $\mathbf{.42{\pm}.06}$ &  $.44{\pm}.08$ &  $.45{\pm}.09$ \\
        Acc.        &  $\mathbf{.66{\pm}.07}$ &  $.65{\pm}.06$ &  $.65{\pm}.06$ &  $.65{\pm}.07$ \\
        \bottomrule
    \end{tabular}
    }
\end{table}

The results, aggregated over datasets and cross-validation folds, are reported
in \autoref{table:ordinal}. We observe that
the g-logistic regression
performs better than the others for the Hausdorff divergence on average.
It performs slightly
worse than a simple logistic regression in term of accuracy, but slightly better
in term of mean absolute error (MAE, the reference metric in ordinal
regression). It thus provides a viable alternative to thresholding techniques,
that performs worse in accuracy but better in MAE. It has the further advantage
of naturally providing a distribution of output given an input $x$. We simply
have, for all $y \in [d]$, $p(Y=y |\,X=x) = {(\text{g-softmax}(g_{W,
b}(x)))}_{y}$. 

\paragraph{Calibration of the geometric loss.}

We validate \autoref{prop:bregman} experimentally on the ordinal
regression dataset \textit{car}. During training, we measure the geometric
cross-entropy loss and the Hausdorff divergence on the train and validation set.
\autoref{fig:training} shows that $\ell_\Omega$ is indeed an upper bound of
$D_\Omega$, and that the difference between both terms reduces to almost $0$ on
the train set. \autoref{prop:bregman} ensures this finding provided that
the set of scoring function is large enough, which appears to be approximately
the case here.

\begin{figure}[ht]
    \begin{minipage}[c]{0.5\linewidth}
        \includegraphics[width=\textwidth]{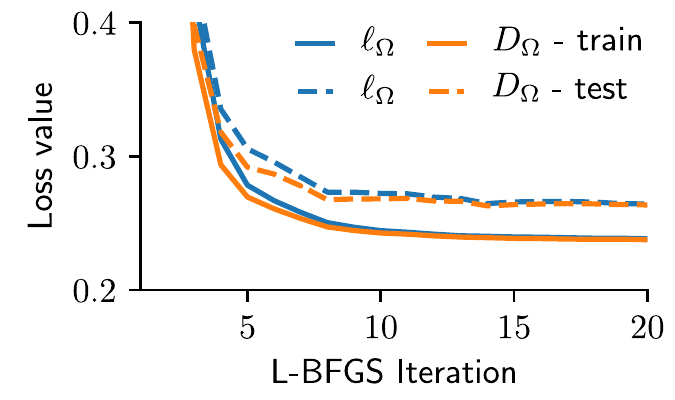}
    \end{minipage}\hfill
    \begin{minipage}[c]{0.45\linewidth}
        \vspace{-1em}
    \caption{Training curves for ordinal regression on dataset \textit{car}. The
    difference between the g-logistic loss and the Hausdorff divergence vanishes on the
train set.}\label{fig:training}
    \end{minipage}
    \vspace{-1em}
\end{figure}

\subsection{Drawing generation with variational auto-encoders}

The proposed geometric loss and softmax are suitable to estimate distributions
from inputs. As a proof-of-concept experiment, we therefore focus on a setting
in which distributional output is natural: generation of hand-drawn doodles and
digits, using the Google QuickDraw \cite{ha_neural_2017} and MNIST dataset. We
train variational autoencoders on these datasets using, as output layers, (1)
the KL divergence with normalized output and (2) our geometric
loss with normalized output.  These approaches output an image prediction using
a softmax/g-softmax over all pixels, which is justified when we seek to
output a concentrated distributional output. This is the case for doodles and
digits, which can be seen as 1D distributions in a 2D space. It differs from the
more common approach that uses a binary cross-entropy loss for every pixel and
enables to capture interactions between pixels at the feature extraction level. We use standard KL penalty on the latent space distribution.

Using the g-softmax takes into account a cost between pixels $(i, j)$ and
$(k, l)$, that we set to be the Euclidean cost $C/\sigma$, where $C$
is the $\ell_2^2$ cost and $\sigma$ is the typical distance of interaction---we choose $\sigma=2$ in our experiments.
We therefore made the hypothesis that it would help in reconstructing the input
distributions, forming a non-linear layer that captures interaction between
inputs in a non-parametric way.

\paragraph{Results.}We fit a simple MLP VAE on 28x28 images from the QuickDraw Cat dataset.
Experimental details are reported in \autoref{app:exp_details} (see
\autoref{fig:big_vae}). We also present an experiment with 64x64 images and a
DCGAN architecture, as well as visualization of a VAE fitted on MNIST. In \autoref{fig:vae}, we compare the reconstruction and the samples after training our model with the g-softmax and simple softmax loss. Using the g-softmax,
which has a deconvolutional effect, yields images that are concentrated near the
edges we want to reconstruct. We compare the training curves for both the softmax and
g-softmax version: using the g-softmax link function and its associated loss better minimizes the asymmetric Hausdorff divergence. The cost of computation is again increased by a factor 10.

\begin{figure}
    \includegraphics[width=\linewidth]{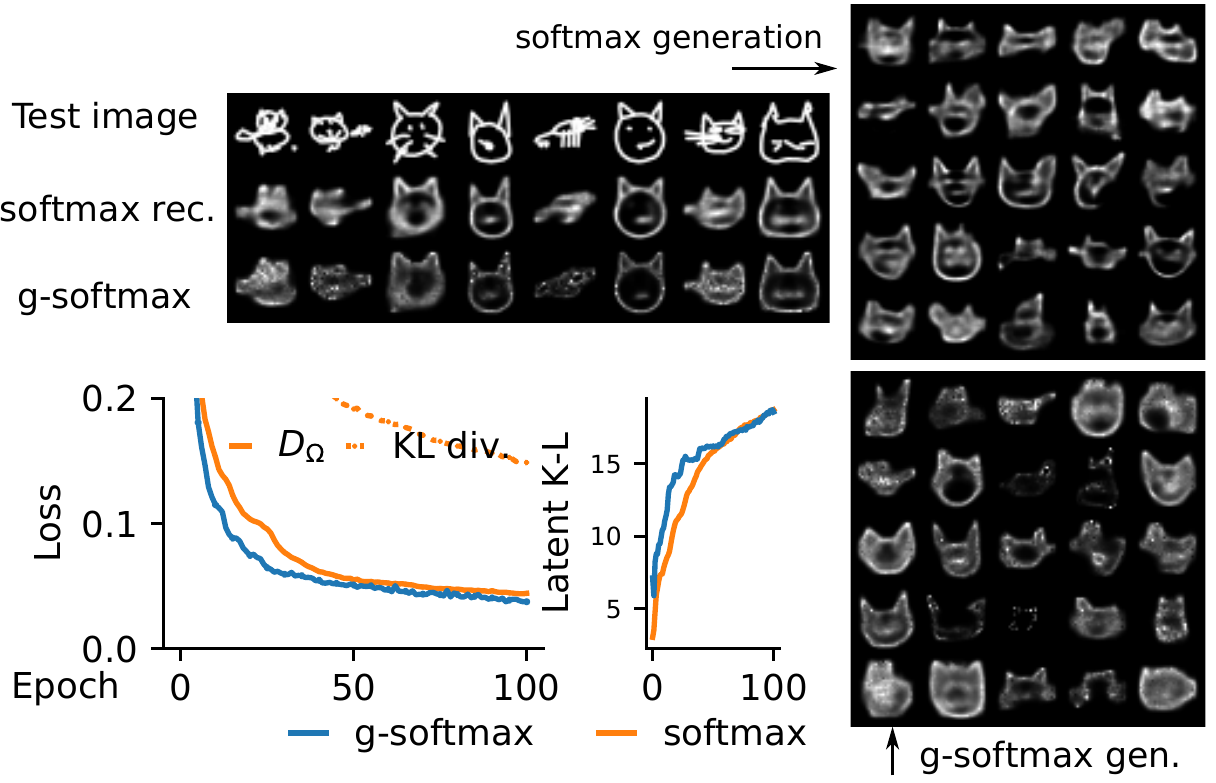}
    \caption{The g-softmax layer permits to generate and reconstruct drawing in a more concentrated manner. For a same level of variational penalty, the g-softmax better
    and faster minimizes the asymmetric Hausdorff divergence. See also \autoref{fig:big_vae}.}\label{fig:vae}
    \vspace*{-1em}
\end{figure}

\section{Conclusion}

We introduced a principled way of learning distributional predictors in
potentially continuous output spaces, taking into account a cost function in
between inputs. We constructed a geometric softmax layer, that we derived from
Fenchel conjugation theory in Banach spaces. The key to our construction is an entropy function derived from regularized optimal transport, convex and
weak$^\star$ continuous on probability measures. Beyond the experiments in discrete measure spaces that we presented, our framework opens the
doors for new applications that are intrinsically off-the-grid, such as
super-resolution.

\clearpage
\section*{Acknowledgements}
The work of A. Mensch and G. Peyré has
been supported by the European Research Council
(ERC project NORIA). A. Mensch thanks Jean Feydy and Thibault Séjourné for fruitful discussions.

\bibliographystyle{icml2019}
\bibliography{biblio}

\clearpage

\newlength{\offsetpage}
\setlength{\offsetpage}{1.0cm}

\changetext{}{-2\offsetpage}{-\offsetpage}{\offsetpage}{}
\renewcommand{\headwidth}{\textwidth}

\onecolumn\vspace*{-.25in}
{\center
{\Large\bf Appendix}\bottomtitlebar}
\appendix

\section{Proofs}

We prove propositions by order of appearance in the main text.

\subsection{Asymptotics of the Sinkhorn negentropy---Proof of \autoref{prop:asymptotic}}\label{app:asymptotic}

\begin{proof}
    We start by showing the Shannon entropy limit of the Sinkhorn entropy, in the discrete case. In this case, we use the standard Kantorovich dual~\cite{cuturi_sinkhorn_2013}. Let $\varepsilon > 0$, $\alpha \in \Simplex^d$, and
    \begin{equation}\label{eq:kantorovich_sym}
        \Omega(\alpha) \triangleq \Omega_{C / \varepsilon}(\alpha) = - \max_{f \in \RR^d}
        \langle \alpha, f \rangle - \langle \alpha \otimes \alpha, \exp(\frac{f \oplus f - C}{2}) \rangle + 1.
    \end{equation}
    For all $f \in \RR^d$
    \begin{equation}
        \Psi_\alpha(f) \triangleq\langle \alpha, f \rangle - \langle \alpha \otimes \alpha, \exp(\frac{f \oplus f - C / \varepsilon}{2}) + 1 \rangle
        = \sum_{i=1}^d f_i \alpha_i - \sum_{i, j= 1}^d \alpha_i \alpha_j
        \exp(\frac{f_i + f_j - c_{i, j} / \varepsilon}{2}) + 1.
    \end{equation}
    For $f$ optimal in \eqref{eq:kantorovich_sym}, letting $\varepsilon \to 0$, we have, using element-wise multiplication $\ast$,
    \begin{equation}
        \nabla \Psi_\alpha(f) = \alpha - \alpha^2 \ast e^f = 0\quad\text{i.e.}\quad
        e^{f_i} = \frac{1}{\alpha_i}\quad\text{for all }i \in [d].
    \end{equation}
    Replacing in \eqref{eq:kantorovich_sym}, we obtain
    \begin{equation}
        \Omega(\alpha) = \langle \alpha, \log \alpha \rangle + \sum_{i=1}^d \alpha_i - 1 = \langle \alpha, \log \alpha \rangle.
    \end{equation}
    Let us now consider the limit for $\varepsilon \to \infty$ of $\Omega_{C / \varepsilon}(\alpha)$, for an arbitrary symmetric cost matrix $C$. We rewrite
    \begin{equation}
        \Omega_{C / \varepsilon}(\alpha) = 
        \max_{f \in \contspace} 2 \langle \alpha, \frac{f}{2} \rangle - \varepsilon
        \langle \alpha \otimes \alpha, e^{\frac{\frac{f \oplus f}{2}- C}{\varepsilon}}\rangle = \text{OT}_{\varepsilon}(\alpha, \alpha).
    \end{equation}
    The asymptotic behavior of $\varepsilon \Omega_{C / \varepsilon}(\alpha)$, namely
        \begin{align}
            \varepsilon \Omega_{C/\varepsilon}(\alpha) &
            \overset{\varepsilon \to +\infty}{\longrightarrow}      
            \frac{1}{2} \langle \alpha \otimes \alpha, -C \rangle,
        \end{align}
    is then a simple consequence of the asymptotics of Sinkhorn OT distances~\cite{genevay_sinkhorn-autodiff_2017}, that we apply in the symmetric case. In the discrete setting, the result for $\varepsilon \to \infty$ becomes, if $C = 1 - I_{d \times d}$,
    \begin{equation}
        \frac{1}{2} \langle \alpha \otimes \alpha, I_{d \times d} - 1 \rangle = \frac{1}{2} \sum_{i=1}^d \alpha_i^2 - 1,
    \end{equation}
    as $\langle \alpha \otimes \alpha, 1 \rangle = 1$, which concludes the proof.

\end{proof}

\subsection{Construction of the geometric softmax---Proof of \autoref{prop:sinkhorn}}\label{app:prop_sinkhorn}

\begin{proof}

We can rewrite the self transport with the change of variable $\mu = \alpha e^{\frac{f}{2}} \in \posspace$,  due to \citet{feydy_global_2018}. We then have $\frac{f}{2} =  -\log \frac{\text{d}\alpha}{\text{d}\mu}$, and  
\begin{align*}
    \Omega(\alpha)\triangleq - \frac{1}{2} \text{OT}_2(\alpha, \alpha) &= - \max_{f \in \contspace}
     \langle \alpha, f \rangle 
     - \log \langle
    \alpha \otimes \alpha, \frac{\exp(f \oplus f - C)}{2} \rangle \\
    &= - \max_{\mu \in \posspace} - 2 \langle \alpha, 
    \log \frac{\text{d}\alpha}{\text{d}\mu} \rangle - \log {\Vert \mu \Vert}_{k_2}^2,
    \\\text{where}\quad{\Vert \mu \Vert}_{k_2} &\triangleq \int_\cX \int_\cX \exp(\frac{-C(x, y)}{2})\text{d}\mu(x)\text{d}\mu(y)
\end{align*}
is the kernel norm defined with kernel $k_2 \triangleq e^{-\frac{C}{2}}$.
Then, the conjugate of $\Omega(\alpha)$ reads, for all $f \in \contspace$,
\begin{align*}
    \Omega^\star(f) &= \max_{\alpha \in \probspace} \langle \alpha, f \rangle - \Omega(\alpha) \\
    &= \max_{\substack{\alpha \in \probspace \\ 
    \mu \in \posspace}}
    \langle \alpha, f \rangle - 2 \langle \alpha, 
    \log \frac{\text{d}\alpha}{\text{d}\mu} \rangle - \log {\Vert \mu \Vert}_{k_2}^2 \\
    &= \max_{\mu \in \posspace} \log \frac{\iint_{\cX^2} \exp\frac{(f(x) +f(y)}{2})\text{d}\mu(x)\text{d}\mu(y)}{\iint_{\cX^2} \exp(-\frac{C(x, y)}{2})\text{d}\mu(x)\text{d}\mu(y)},
\end{align*}
where we have used the conjugation of the relative entropy over the space of probability measure $\probspace$:
\begin{equation*}
    \max_{\alpha \in \probspace}
    \langle \alpha, f \rangle - 2\langle \alpha, 
    \log \frac{\text{d}\alpha}{\text{d}\mu} \rangle = 2 \log \int_\cX \exp(\frac{f(x)}{2}) \text{d} \mu(x).
\end{equation*}
We now revert the first change of variable, setting $\beta = \mu e^{\frac{f}{2}} \in \posspace$, and $\alpha = \frac{\beta}{\int_\cX \text{d}\nu} \in \probspace$. We have
\begin{align*}
    \Omega^\star(f) &= \max_{\alpha \in \probspace} - \log \iint_{\cX^2}
    \exp(-\frac{f(x) + f(y) + C(x, y)}{2})\text{d}\alpha(x)\text{d}\alpha(y),
\end{align*}
and the first part of the proposition follows:
\begin{align*}
    \text{g-LSE}(f) = \Omega^\star(f) &= - \min_{\alpha \in \probspace}
     \langle \alpha \otimes \alpha, \exp(- \frac{f \oplus f + C}{2}) \rangle.
\end{align*}

We have assumed that $\exp(-\frac{C}{2})$ is positive definite, which ensures that the bivariate function
\begin{equation}\label{eq:quadratic}
    \Phi(f, \alpha) \triangleq
    \langle \alpha \otimes \alpha, \exp(-\frac{f \oplus f + C}{2}) \rangle
\end{equation}
is strictly convex in $\alpha$ and in $f$. Let $\alpha^\star \triangleq \argmin_{\alpha \in \probspace} \Phi(f, \alpha)$. The gradient of $\Phi$ with respect to $f$
is a measure that reads
\begin{align}
    \nabla_f \Phi(f, \alpha)
    &= -\alpha \,\exp(-f - T_C(-f, \alpha)) \in \cM(\cY),\quad\text{where we recall} \\
    T_C(f, \alpha) &\triangleq - 2 \log \langle \alpha, \exp(\frac{f - C}{2}) \rangle.
\end{align}
From a generalized version of the Danskin theorem~\cite{bernhard_variations_1990}, the function
\begin{equation}
    f \to \argmin_{\alpha \in \probspace}  \langle \alpha \otimes \alpha, \exp(-\frac{f \oplus f + C}{2}) \rangle
\end{equation}
is differentiable everywhere and has for gradient
$\nabla_f \Phi(f, \alpha^\star)$. 
Composing with the $\log$, we obtain
\begin{equation}
    \nabla \Omega^\star(f) \in \probspace,\quad\text{and}\quad\nabla \Omega^\star(f) \propto \alpha^\star \,\exp(-f - T_C(-f, \alpha^\star)),
\end{equation}
where $\propto$ indicates proportionality. To conclude, we use \autoref{lemma:first_order}, that describes the minimizers of~\eqref{eq:quadratic}, and that we prove in the next section. It ensures that $-f - T_C(-f, \alpha^\star) = 0$ on the support of~$\alpha^\star$. Therefore
\begin{equation}
    \nabla \Omega^\star(f) = \alpha^\star \in \probspace,
\end{equation}
and the proposition follows.
\end{proof}

\subsection{Geometry of the link function---Proofs of \autoref{lemma:inversion} and \autoref{prop:extrapolation}}\label{app:geometric}

We first state and proof \autoref{lemma:first_order} on optimality condition in the minimization of $\alpha \to \Phi(\alpha, f)$. We then prove \autoref{lemma:inversion}, establish some basic properties of the extrapolation operator and prove \autoref{prop:extrapolation}.

\subsubsection{Necessary and sufficient condition of optimality in $\nabla \Omega^\star(f)$}\label{app:lemma_first_order}

Finding the minimizer $\alpha$ of $\alpha \to \Phi(\alpha, f)$ amounts to finding
the distribution for which $-f$ and its C-transform $T(-f, \alpha)$ are the less distant, as it appears in the following lemma.

\begin{lemma}[$\nabla \Omega^\star$ from first order optimality condition]\label{lemma:first_order}
    $\nabla \Omega^\star(f)$ is the only distribution $\alpha \in \probspace$ such
    that there exists a constant $A \in \RR$ such that
    \begin{equation}\label{eq:first_order}
        \begin{split}
            \frac{f(y) + T(-f, \alpha)(y)}{2} = A \quad\forall\, y \in \supp \alpha \\ 
            \frac{f(y) + T(-f, \alpha)(y)}{2} \leq A \quad\forall\, y \in \cY / \supp \alpha,
        \end{split}
    \end{equation}
    We then have $A = 2 \Omega^\star(f)$. \eqref{eq:first_order} form sufficient optimality conditions for finding $\nabla \Omega^\star(f) = \alpha$.
\end{lemma}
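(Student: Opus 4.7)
The plan is to interpret the stated conditions as the KKT optimality conditions for the strictly convex constrained minimization
\begin{equation}
    \alpha^\star \triangleq \nabla \Omega^\star(f) = \argmin_{\alpha \in \probspace} \Phi(\alpha, f)
\end{equation}
from \autoref{prop:sinkhorn}, and then to pin down the constant $A$ via the 2-homogeneity of $\alpha \mapsto \Phi(\alpha, f)$. Existence and uniqueness of $\alpha^\star$ follow from weak$^\star$ compactness of $\probspace$ (Banach--Alaoglu) paired with strict convexity of $\Phi(\cdot, f)$, the latter inherited from positive definiteness of the kernel $e^{-C/2}$. Strict convexity will also supply the uniqueness half of the lemma: any $\alpha$ satisfying \eqref{eq:first_order} will be shown to be a minimizer, and must therefore coincide with $\alpha^\star$.

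First I would compute the first variation of $\Phi(\cdot,f)$ in the sense of measures. Splitting the exponential in $\Phi$ into a product of $e^{-f(x)/2}$, $e^{-f(y)/2}$ and $e^{-C(x,y)/2}$, and recognizing the inner integral through the definition of the soft $C$-transform, a short calculation yields
\begin{equation}
    \nabla_\alpha \Phi(\alpha, f)(y) = 2\int_\cY e^{-\frac{f(x)+f(y)+C(x,y)}{2}}\,d\alpha(x) = 2\exp\!\Big(-\frac{f(y) + T(-f,\alpha)(y)}{2}\Big).
\end{equation}
Writing the KKT conditions for minimization of $\Phi(\cdot,f)$ over the probability simplex (equality constraint $\int d\alpha = 1$ together with non-negativity) then produces a scalar Lagrange multiplier $\lambda$ such that $\nabla_\alpha \Phi(\alpha^\star,f)(y) = \lambda$ on $\supp \alpha^\star$ and $\nabla_\alpha \Phi(\alpha^\star,f)(y) \geq \lambda$ elsewhere. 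Taking logarithms and setting $A \triangleq -\log(\lambda/2)$ recovers exactly the equalities and inequalities in \eqref{eq:first_order}.

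To identify $A$, I would use the 2-homogeneity of $\Phi(\cdot, f)$: by Euler's identity, $\int \nabla_\alpha \Phi(\alpha^\star, f)\,d\alpha^\star = 2\Phi(\alpha^\star,f) = 2 e^{-\Omega^\star(f)}$, while on the other hand this integral reduces to $\lambda$ since the integrand equals the constant $\lambda$ on $\supp \alpha^\star$ and $\alpha^\star$ is a probability measure. Combining these two expressions yields $\lambda$, and hence $A$, in closed form in terms of $\Omega^\star(f)$. For the sufficiency direction, any $\alpha$ satisfying \eqref{eq:first_order} is a critical point of the Lagrangian for the convex program, hence a minimizer, and strict convexity forces $\alpha = \alpha^\star = \nabla \Omega^\star(f)$.

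The main obstacle I expect is making the variational calculus measure-theoretically rigorous: justifying Fréchet differentiability of $\Phi$ in the sense of \eqref{eq:displacement}, setting up the KKT framework on $\probspace \subset \cM(\cY)$ whose boundary is governed by support constraints, and handling the Lagrange multiplier on an infinite-dimensional space. Once existence of the minimizer and weak$^\star$ continuity of $\nabla_\alpha \Phi$ are established, the core computation reduces to recognizing the soft $C$-transform inside the first variation and applying Euler's homogeneity relation.
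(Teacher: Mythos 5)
Your proposal follows essentially the same route as the paper's proof: infinite-dimensional KKT conditions for the constrained minimum of $\Phi(\cdot,f)$ over $\probspace$ (the paper cites Luenberger for the same machinery), the same first-variation computation identifying $\nabla_\alpha \Phi(\alpha,f)(y) = 2\exp\big(-\tfrac12(f(y) + T(-f,\alpha)(y))\big)$ via the soft $C$-transform, and the same logic for sufficiency (strict convexity of $\Phi(\cdot,f)$ from positive definiteness of $e^{-C/2}$). The only cosmetic difference is how the constant $A$ is pinned down: you invoke Euler's identity for the $2$-homogeneous quadratic form $\Phi(\cdot,f)$, whereas the paper directly substitutes the algebraically equivalent identity $\Phi(f,\alpha) = \langle \alpha, \exp(-\tfrac12(f + T(-f,\alpha)))\rangle$ into $\Omega^\star(f) = -\log\Phi(f,\alpha^\star)$; both are one-line rewrites of the same relation. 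One remark worth noting: carried to completion, both your Euler computation and the paper's own substitution give $A = \Omega^\star(f)$, not the $A = 2\Omega^\star(f)$ printed in the lemma statement — the paper's proof itself concludes $-\log(-\nu/2) = \Omega^\star(f)$, so the factor of $2$ in the statement is a typo, and your argument lands in the same place as the paper's.
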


\begin{proof}We use an infinite version of the KKT condition~\citep[Section 9]{luenberger1997optimization} to solve the optimality of $\phi$, as defined in \eqref{eq:quadratic}. We fix $f \in \contspace$. The Lagrangian associated to the minimization of $\alpha \to \phi(f, \alpha)$ over the space of probability measure $\cM(\cX)$ reads
\begin{equation}
    L(\alpha, \mu, \nu)
    \triangleq \Phi(f, \alpha) + \langle \alpha, \mu \rangle + \nu (\langle \alpha, 1 \rangle - 1).
\end{equation}
A necessary and sufficient condition for $\alpha^\star$ to be optimal is the existence of a function $\mu \in \contspace$ and a real $\nu \in \RR$ such that,
\begin{align}
    \alpha^\star &\in \probspace\quad\text{(primal feasibility)},\\
    \forall\,y\in \cY,\quad- \nabla_\alpha \Phi(f, \alpha^\star)(y) &= \mu(y) + \nu\quad\text{(stationarity)},\\
    \forall\,y\in \cY,\quad\mu(y) &\leq 0\quad\text{(dual feasibility)}, \\
    \forall\,y \in \supp(\alpha^\star),\quad\mu(y) &= 0\quad\quad\text{(complementary slackness)},
\end{align}
where the derivative $\nabla_\alpha \Phi(f, \alpha^\star)$ is the displacement derivative \eqref{eq:displacement}, computed as
\begin{equation}
    \nabla_\alpha \Phi(f, \alpha^\star)(y) = 2 \exp(-\frac{f + T(-f, \alpha)}{2}).
\end{equation}
Therefore
\begin{align}
    \frac{f + T(-f, \alpha^\star)}{2} = - \log(-\frac{\nu}{2})\quad\text{on the support of $\alpha^\star$, and} \\
    \frac{f + T(-f, \alpha^\star)}{2} = - \log(-\frac{\mu(y) + \nu}{2}) \leq - \log(-\frac{\nu}{2})\quad\text{otherwise}.\label{eq:inequality_first_order}
\end{align}
Replacing in the definition $\Omega^\star(f) = - \log \Phi(f, \alpha^\star)$, and using the equality
\begin{equation}
    \Phi(f, \alpha) = \langle \alpha, \exp(-\frac{f + T(-f, \alpha)}{2}) \rangle
\end{equation}
we obtain
\begin{equation}
    - \log(-\frac{\nu}{2}) = \Omega^\star(f),
\end{equation}
and the first part of the lemma follows. Then, note that $T(f + c, \alpha) = T(f, \alpha) - c$ for all $c\in \RR$, $f \in \contspace$, $\alpha \in \probspace$. Removing $\Omega^\star(f)$ from both side of inequality \eqref{eq:inequality_first_order}, we obtain
\begin{equation}
    f - \Omega^\star(f) + T\big(-(f - \Omega^\star(f)), \nabla \Omega^\star(f)\big) \leq 0,
\end{equation}
with equality on the support of $\nabla \Omega^\star(f)$, which brings the second part of the lemma.
\end{proof}

\subsubsection{Proof of \autoref{lemma:inversion}}

\begin{proof}
    Let $\alpha \in \probspace$ and $f \triangleq \nabla \Omega(\alpha)$. From the optimality condition of Sinkhorn dual minimization~\eqref{eq:dual_kantorovich},
        \begin{equation}
            T(-f, \alpha) = - f,
        \end{equation}
        hence, $\alpha$ meets the sufficient conditions for optimality in \autoref{lemma:first_order}. Therefore $\nabla \Omega^\star(f) = \alpha$, $\Omega^\star(f) = 0$, and the first part of the lemma follows. To demonstrate the second part, we consider
        $f \in \cF$. There exists $\alpha \in \probspace$ such that $f = \nabla \Omega(\alpha)$, and thus
        \begin{equation}
            \nabla \Omega \circ \nabla \Omega^\star(f) = \nabla \Omega \circ \nabla \Omega^\star \circ \Omega(\alpha) = \nabla \Omega(\alpha) = f.
        \end{equation}
    The lemma follows.
    \end{proof}

\subsubsection{Extrapolation effect of $\nabla \Omega^\star$---Proof of \autoref{prop:extrapolation}}

We start by establishing some basic properties of the extrapolation operator.

\begin{lemma}[Properties of $f^E$]\label{lemma:properties}
    The following properties hold, for all $f \in \contspace$,
    \begin{enumerate}[label=\roman*.]
    \item The extrapolated potential $f^E$ verifies
    \begin{equation}
        f \leq f^E, \quad
        f_{| \supp \nabla \Omega^\star(f)} = f^E_{| \supp \nabla \Omega^\star(f)}.
    \end{equation}
    \item The extrapolation operator maintain the following values:
    \begin{equation}
        f^{EE} = f^E,\,\Omega^\star(f^E) = \Omega^\star(f), \,\nabla \Omega^\star(f^E) = \nabla \Omega^\star(f).
    \end{equation}
    \end{enumerate}
\end{lemma}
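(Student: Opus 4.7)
The plan is to build on two ingredients already established in the appendix: the sufficient optimality condition from Lemma 1 (Appendix~\ref{app:lemma_first_order}), which characterizes $\alpha = \nabla \Omega^\star(f)$ as the unique probability measure for which $\tfrac{f + T(-f,\alpha)}{2}$ equals $\Omega^\star(f)$ on $\supp \alpha$ and is $\leq \Omega^\star(f)$ off it; and Proposition~\ref{prop:extrapolation}, which asserts $\nabla \Omega \circ \nabla \Omega^\star(f) = f^E - \Omega^\star(f)$, so that $f^E - \Omega^\star(f) \in \mathcal{F}$ by construction. Throughout I will use the translation identity $T(g + c, \alpha) = T(g, \alpha) - c$ without comment.

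For (i), I would simply translate Lemma 1's optimality condition by $\Omega^\star(f)$. Setting $\alpha = \nabla \Omega^\star(f)$ and subtracting $\Omega^\star(f)$, Lemma 1 reads $-T(-(f - \Omega^\star(f)), \alpha) \geq f - \Omega^\star(f)$ pointwise on $\mathcal{Y}$, with equality on $\supp \alpha$. Adding $\Omega^\star(f)$ back, the left-hand side is exactly $f^E$ by definition, giving $f^E \geq f$ everywhere and $f^E = f$ on $\supp \nabla \Omega^\star(f)$, as claimed.

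For (ii), the first substep is to show $\nabla \Omega^\star(f^E) = \nabla \Omega^\star(f) = \alpha$ and $\Omega^\star(f^E) = \Omega^\star(f)$. Since $f^E - \Omega^\star(f) \in \mathcal{F}$ by Proposition~\ref{prop:extrapolation}, it is a symmetric Sinkhorn potential for $\alpha$, so the defining fixed-point identity $-T(-(f^E - \Omega^\star(f)), \alpha) = f^E - \Omega^\star(f)$ holds \emph{everywhere} on $\mathcal{Y}$. Translating back, this is equivalent to $\tfrac{f^E + T(-f^E, \alpha)}{2} = \Omega^\star(f)$ pointwise on $\mathcal{Y}$, which satisfies (with equality everywhere, hence trivially on support and with slack $0$ off support) the sufficient conditions of Lemma 1 applied to $f^E$ with constant $A = \Omega^\star(f)$. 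By the uniqueness statement of Lemma 1 we conclude $\nabla \Omega^\star(f^E) = \alpha$ and $\Omega^\star(f^E) = \Omega^\star(f)$. Idempotence $f^{EE} = f^E$ then follows by plugging these values into the definition: $f^{EE} = -T(-(f^E - \Omega^\star(f)), \alpha) + \Omega^\star(f)$, and the inner expression evaluates to $f^E - \Omega^\star(f)$ by the same fixed-point identity, yielding $f^{EE} = f^E$.

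The main obstacle I anticipate is a pointwise-vs-almost-everywhere issue: Lemma 1 produces equality of $\tfrac{f + T(-f,\alpha)}{2}$ with $\Omega^\star(f)$ only on $\supp \alpha$, while the argument for (ii) requires the stronger pointwise identity on all of $\mathcal{Y}$ after one round of extrapolation. This is exactly what Proposition~\ref{prop:extrapolation} provides via the membership $f^E - \Omega^\star(f) \in \mathcal{F}$, so the whole lemma reduces to careful bookkeeping of the translation by $\Omega^\star(f)$ and one invocation of the Sinkhorn potential fixed-point identity; no further inequality or compactness argument is needed.
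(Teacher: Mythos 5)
Your argument for part (i) is correct and is essentially the paper's: translate the inequality from the optimality condition of \autoref{lemma:first_order} by $\Omega^\star(f)$, recognize the resulting left-hand side as $f^E - \Omega^\star(f)$, and read off $f \leq f^E$ with equality on $\supp \nabla \Omega^\star(f)$.

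For part (ii), however, there is a genuine circularity. You invoke \autoref{prop:extrapolation} to conclude $f^E - \Omega^\star(f) \in \cF$ and hence the fixed-point identity, but in the paper \autoref{prop:extrapolation} is proved \emph{after} and \emph{using} the present lemma: its proof explicitly cites \autoref{lemma:properties}(ii) to pass from $\nabla \Omega \circ \nabla \Omega^\star(f^E)$ to $\nabla \Omega \circ \nabla \Omega^\star(f)$, and its opening claim that $g = f^E - \Omega^\star(f)$ satisfies $T(-g,\alpha) = -g$ itself relies on part (i). So you cannot assume \autoref{prop:extrapolation} here. The gap is local and easy to close without it: use only the elementary observation that the soft $C$-transform $T(h,\alpha)$ depends solely on $h|_{\supp\alpha}$. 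Combined with part (i) ($f^E = f$ on $\supp\alpha$ with $\alpha = \nabla \Omega^\star(f)$), this gives $T(-f^E,\alpha) = T(-f,\alpha)$ pointwise on all of $\cY$, and since $T(-f,\alpha) = 2\Omega^\star(f) - f^E$ by the definition of $f^E$, the identity $\tfrac{f^E + T(-f^E,\alpha)}{2} = \Omega^\star(f)$ holds everywhere. From there your invocation of the uniqueness statement in \autoref{lemma:first_order} and your derivation of $f^{EE} = f^E$ go through unchanged. Once corrected, your route is in fact cleaner than the paper's, which off $\supp\alpha$ resorts to monotonicity of the $C$-transform and, as written, yields $\tfrac{f^E + T(-f^E,\alpha)}{2} \geq \Omega^\star(f)$ — the reverse of what the sufficient condition of \autoref{lemma:first_order} requires; the pointwise equality you aim for is both stronger and more direct.
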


\begin{proof}We demonstrate (i), then (ii).

        i. Note that $T(f + c, \alpha) = T(f, \alpha) - c$ for all $c\in \RR$, $f \in \contspace$, $\alpha \in \probspace$. Removing $\Omega^\star(f)$ from both side of inequality \eqref{eq:inequality_first_order}, we obtain
        \begin{equation}
            f - \Omega^\star(f) + T\big(-(f - \Omega^\star(f)), \nabla \Omega^\star(f)\big) \leq 0,
        \end{equation}
        with equality on the support of $\nabla \Omega^\star(f)$.

        ii.  We set $\alpha = \nabla \Omega^\star(f)$. According to \autoref{lemma:first_order},
        for all $y \in \supp \alpha$, $f^E(y) = f(y)$ and
        \begin{equation}
            \frac{f^E(y) + T(-f^E, \alpha)(y)}{2} = 2 \Omega^\star(f).
        \end{equation}
        Furthermore, for all $y \in \cY$, $-f^E(y) \leq - f(y)$, and therefore, 
        as the soft C-transform operator is non-increasing with respect to $f$,
        \begin{equation}
            2 \Omega^\star(f) - f^E = T(-f, \nabla \Omega^\star(f)) \leq T(-f^E, \nabla \Omega^\star(f)),
        \end{equation}
        where the left equality stems from the definition of $f^E$. Therefore
        \begin{equation}
            \frac{f^E(y) + T(-f^E, \eta)(y)}{2} \leq 2 \Omega^\star(f),
        \end{equation}
        on all $\cY$, and we meet the sufficient condition of \autoref{lemma:first_order} for the optimality of $\eta$ in
        \begin{equation}
            \min_{\alpha \in \probspace} \Phi(f^E, \alpha).
        \end{equation}
        We thus have $\Omega^\star(f^E) = \Omega^\star(f)$, $\nabla \Omega^\star(f) = \nabla \Omega^\star(f^E)$. Therefore
        \begin{align}
            f^{EE} &= - T(-f^E, \nabla \Omega^\star(f^E)) + 2 \Omega^\star(f^E) \\
            &= - T(-f^E, \nabla \Omega^\star(f)) + 2 \Omega^\star(f) \\ 
            &= - T(-f, \nabla \Omega^\star(f)) + 2 \Omega^\star(f) = f^E,
        \end{align}
        where we have used on the third line the fact that the value of $T(f, \alpha)$ depends only on the values of $f$ on the support of $\alpha$. In our case, we have $f^E_{|\supp \nabla \Omega^\star(f)} = f_{|\supp \nabla \Omega^\star(f)}$, from \autoref{lemma:first_order}.
    The lemma follows.
\end{proof}

With \autoref{lemma:inversion} and \autoref{lemma:properties} at hand, we are now ready to prove~\autoref{prop:extrapolation}.

\begin{proof}
    We consider a function $f \in \contspace$. By construction of the extrapolation $f^E$,
        \begin{equation}
            g = f^E - \nabla \Omega^\star(f)
        \end{equation}
        is a negative symmetric Sinkhorn potentials, as $T(-g, \nabla \Omega^\star(f)) = -g$.
        Therefore, from~\autoref{lemma:inversion},
        \begin{align}
            \nabla \Omega \circ \nabla \Omega^\star(g) &= g \\
            \nabla \Omega \circ \nabla \Omega^\star(f^E) &= f^E - \nabla \Omega^\star(f) \\
            \nabla \Omega \circ \nabla \Omega^\star(f) &= f^E - \nabla \Omega^\star(f),
        \end{align}
        where the third equality stems from~\autoref{lemma:properties}, property (ii), and the second from~\eqref{eq:translation}.
\end{proof}

\subsection{Relation to Hausdorff divergence---Proofs of \autoref{prop:upper_bound} and \autoref{prop:bregman}}\label{app:statistic}

We now turn to proving~\autoref{prop:upper_bound} and~\autoref{prop:bregman},
that justifies the validity of the geometric logistic loss for a certain Bregman divergence, dubbed the asymmetric Hausdorff divergence.

\subsubsection{Proof of \autoref{prop:upper_bound}}

\begin{proof}
    Let $\alpha \in \probspace$ and $f \in \contspace$. By definition, the Hausdorff divergence $H = D_\Omega$ between $\alpha$ and $\nabla \Omega^\star(f)$ rewrites
    \begin{align}
        D_\Omega(\alpha | \nabla \Omega^\star(f)) &= \Omega(\alpha) - \Omega(\nabla \Omega^\star(f))
        - \langle \nabla \Omega \circ \nabla \Omega^\star(f), \alpha - \Omega^\star(f^E) \rangle \\
        &= \Omega(\alpha) + \langle f, \nabla \Omega^\star(f) \rangle - \Omega(\nabla \Omega^\star(f))
        - \langle f, \alpha \rangle + \langle f - \nabla \Omega \circ \nabla \Omega^\star(f), \alpha - \nabla \Omega^\star(f) \rangle \\
        &= \ell_\Omega(\alpha, f) + \langle f - \nabla \Omega \circ \nabla
        \Omega^\star(f), \alpha - \nabla \Omega^\star(f) \rangle.
    \end{align}
This decomposition is a generic way of decomposing a Bregman divergence into a
Fenchel-Young loss plus a perturbation term that depends on the ``projection'' $\nabla \Omega \circ \nabla \Omega^\star(f)$. In our case, thanks to \autoref{lemma:properties}, property (iv), this term rewrites
\begin{equation}
    \langle f - \nabla \Omega \circ \nabla \Omega^\star(f), \alpha - \nabla \Omega^\star(f) \rangle = \langle f - f^E, \alpha \rangle + \langle f - f^E, \nabla \Omega^\star(f) \rangle + \Omega^\star(f) \langle 1, \alpha - \nabla \Omega^\star(f) \rangle.
\end{equation}
The second term is null as a consequence of \autoref{lemma:properties}, while the third is null because $\alpha$ and $\nabla \Omega^\star(f)$ are both probability measures. The first one is null in case $\supp \nabla \Omega^\star(f) \in \supp \alpha$, in accordance to \autoref{lemma:properties}, property (i). The proposition follows from the fact that $f^E \geq f$ on the space $\cY$, according to the same property.
\end{proof}

\subsubsection{Proof of \autoref{prop:bregman}}

\begin{proof}
    As a consequence of \autoref{prop:upper_bound}, for any true and estimated distribution $\alpha, \hat \alpha \in \probspace$, we have
    \begin{equation}
        D_\Omega(\alpha | \hat \alpha) = D_\Omega(\alpha | \nabla \Omega^\star(\nabla \Omega(\alpha)))
        = \ell_\Omega(\alpha, \nabla \Omega(\alpha)) - \langle \alpha, {(\nabla \Omega(\alpha))}^E - \nabla \Omega(\alpha) \rangle,
    \end{equation}
    where the last term is null as $T(-\nabla \Omega(\alpha), \alpha) = -\nabla \Omega(\alpha)$ and $\Omega^\star(\nabla \Omega(\alpha)) = 0$ from \autoref{lemma:inversion}.
    Therefore
    \begin{equation}
        D_\Omega(\alpha | \hat \alpha) = \ell_\Omega(\alpha, \nabla \Omega(\alpha)).
    \end{equation}
    The equality of risks and the connection between minimizers immediately follows. To establish the Fisher consistency of the g-FY loss with respect to the Hausdorff divergence, note that, from \autoref{prop:upper_bound}, we have, for all $\hat f: \cX \to \contspace$, for all $x \in \cX$, 
    $\alpha \in \probspace$,
    \begin{equation}
        D_\Omega(\alpha | \nabla \Omega^\star(\hat f(x)) \leq \ell_\Omega(\alpha, \hat f(x)).
    \end{equation}
    Taking the expectation with respect to the data distribution $\cD$, we obtain
    \begin{equation}
        \cE(\nabla \Omega^\star \circ \hat f) \leq \cR(\hat f),
    \end{equation}
    and the proposition follows.
\end{proof}

\section{Further experiments and details}\label{app:exp_details}

\subsection{Variational auto-encoders}

\paragraph{High definition experiment.}As a complementary experiment, we generate a dataset of cat doodles from the Google QuickDraw dataset, with a line width of one pixel. We test the g-softmax link function
and the geometric Fenchel-Young loss functions to train a VAE with a DC-GAN architecure~\cite{radford2015unsupervised}. We reuse the architecture of the authors, using the  discriminator as an encoder, with a final layer with a size of output twice the size of the latent dimension, to model the mean and variance of the latent encoding, and the generator as a decoder. Similarly to
the experiment in the main text, we observe that the generated samples and the reconstructions are more concentrated on thin measures.

\paragraph{MNIST.}We display a visualization of generates images and
reconstruction of test image in \autoref{fig:mnist}. The output distributions
are well concentrated, despite the low resolution of the dataset.

\paragraph{Architecture}Our multi-layer perceptron is simple: encoder and decoder are two layer MLP with 400 hidden units and ReLU activation.

\paragraph{Hyperparameters.}We use a latent size of 100 in the experiment on QuickDraw 28x28, and 256 for the high resolution experiment. We set the KL weight to $1$, and rescale the KL loss with a factor $h \times w$, to make its gradient of the same order as the one computed with separated binary cross entropy. We use $\sigma = 2$ as the scaling parameter of the Euclidean cost function.

\begin{figure}
    \centering
    \includegraphics[width=.3\textwidth]{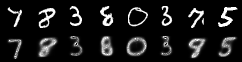}
    \includegraphics[width=.3\textwidth]{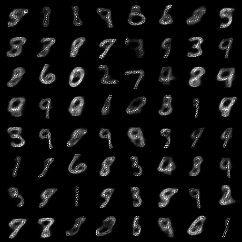}
    \caption{Examples of generated images and reconstruction of test images with an MLP VAE on MNIST dataset.}\label{fig:mnist}
\end{figure}

\begin{figure}
    \centering
    \includegraphics[width=\textwidth]{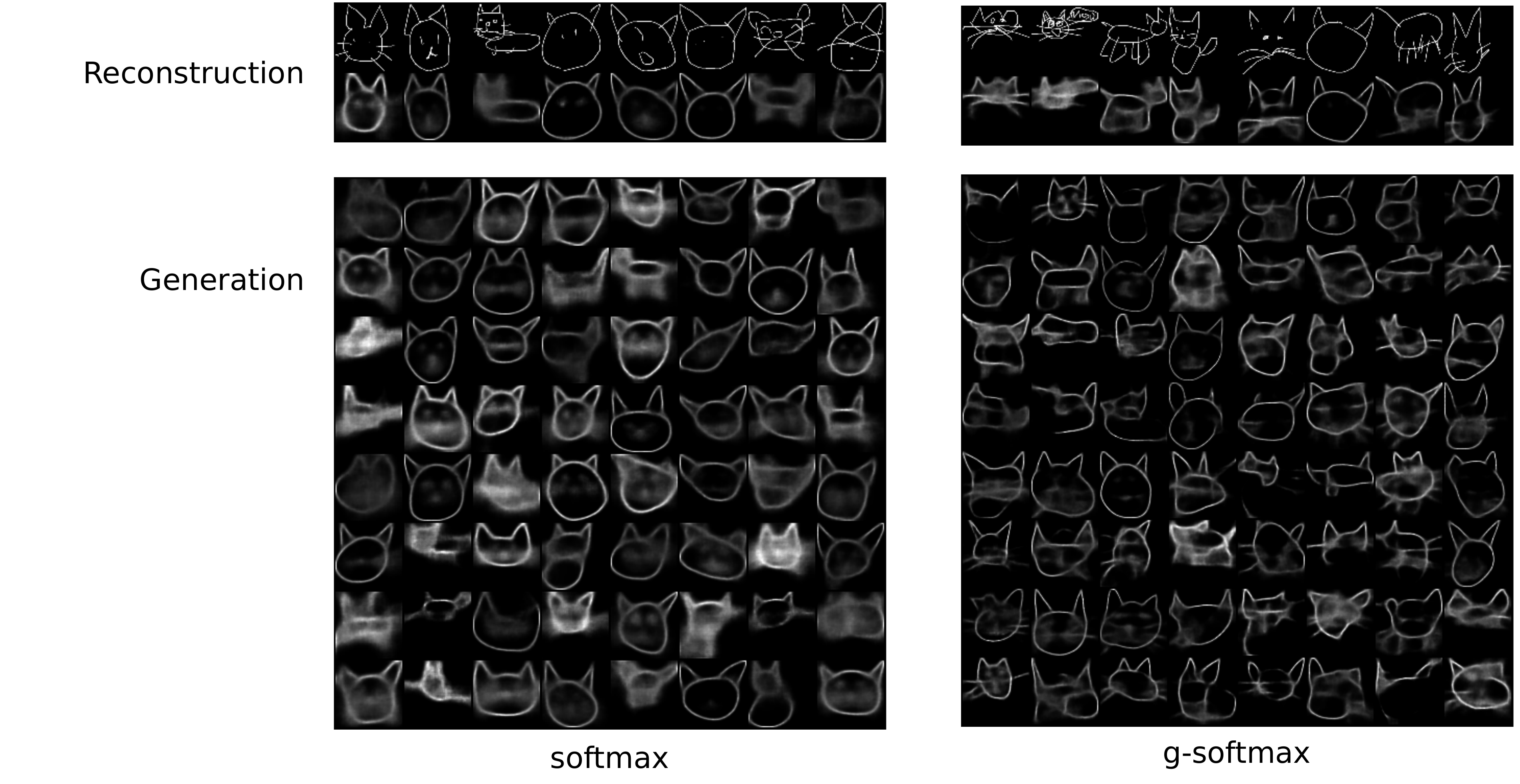}
    \caption{Examples of generated images and reconstruction of test images with a VAE-DC-GAN and a geometric softmax last layer. The generated images are sharper than when using a standard softmax layer and a KL divergence training.}\label{fig:big_vae}
\end{figure}


\end{document}